\newlength\myindent
\newcommand\bindent{%
  \begingroup
  \setlength{\itemindent}{\myindent}
  \addtolength{\algorithmicindent}{\myindent}
}
\newcommand\eindent{\endgroup}
\theoremstyle{plain}
\newtheorem{theorem}{Theorem}[section]
\newtheorem{informalprop}[theorem]{Informal Proposition}
\theoremstyle{definition}
\newtheorem{definition}[theorem]{Definition}
\theoremstyle{remark}
\newtheorem{remark}[theorem]{Remark}
\DeclareMathOperator{\Tr}{Tr}
\newcommand{\defeq}{\vcentcolon=}
\renewcommand{\phi}{\varphi}
\newcommand{\half}{\tfrac{1}{2}}
\newcommand{\R}{\mathbb{R}}
\newcommand{\range}{\operatorname{range}}
\newcommand{\proj}{\operatorname{proj}}
\newcommand{\ortho}{\operatorname{orth}}
\newcommand{\rank}{\operatorname{rank}}
\DeclareMathOperator*{\argmin}{arg\,min}
\newcommand{\rev}[1]{#1}
\title{SketchOGD: Memory-Efficient Continual Learning}
\author{%
  Youngjae Min, Benjamin Wright, Jeremy Bernstein, \textnormal{and} Navid Azizan\\
  Massachusetts Institute of Technology \\  \texttt{\{yjm,bpwright,jbernstein,azizan\}@mit.edu} \\
}
\begin{document}

\maketitle

\begin{abstract}
When machine learning models are trained continually on a sequence of tasks, they are often liable to forget what they learned on previous tasks---a phenomenon known as \textit{catastrophic forgetting}. Proposed solutions to catastrophic forgetting tend to involve storing information about past tasks, meaning that memory usage is a chief consideration in determining their practicality. This paper develops a memory-efficient solution to catastrophic forgetting using the idea of \emph{matrix sketching}, in the context of a simple continual learning algorithm known as orthogonal gradient descent (OGD). OGD finds weight updates that aim to preserve performance on prior datapoints, using gradients of the model on those datapoints. However, since the memory cost of storing prior model gradients grows with the runtime of the algorithm, OGD is ill-suited to continual learning over long time horizons. To address this problem, we propose SketchOGD. SketchOGD employs an online sketching algorithm to compress model gradients as they are encountered into a matrix of a fixed, user-determined size. In contrast to existing memory-efficient variants of OGD, SketchOGD runs online without the need for advance knowledge of the total number of tasks, is simple to implement, and is more amenable to analysis. We provide theoretical guarantees on the approximation error of the relevant sketches under a novel metric suited to the downstream task of OGD. Experimentally, we find that SketchOGD tends to outperform current state-of-the-art variants of OGD given a fixed memory budget.\footnote{We provide code for SketchOGD and for reproducing our results at \url{https://github.com/azizanlab/sketchogd}.}
\end{abstract}

\section{Introduction}
Humans excel at learning continuously, acquiring new capabilities throughout their lifespan without repeatedly relearning previously mastered tasks. Moreover, knowledge gained in one context can often transfer to other tasks, and human learning remains relatively robust in the face of natural distribution shifts~\citep{barnett2002and,bremner2012multisensory}. In contrast, endowing deep neural networks with similar continual learning abilities remains a challenge, largely due to \textit{catastrophic forgetting} \citep{mccloskey}. Under the naïve approach of training networks on tasks sequentially, performance on earlier tasks can decline precipitously once new tasks are introduced.

One of the most important requirements for a continual learning algorithm intended to overcome catastrophic forgetting is that it be memory-efficient. Various approaches have been proposed to combat catastrophic forgetting (as briefly introduced in Section \ref{sec:related}). These approaches tend to involve storing information about past tasks to preserve trained knowledge while learning new tasks. Since continual or lifelong learning frameworks should consider learning new tasks over arbitrarily long time horizons with fixed memory budgets, memory cost is a chief consideration for practicality.

To advance memory-efficient continual learning, we center our attention on a conceptually straightforward yet powerful continual learning algorithm known as orthogonal gradient descent (OGD) \citep{Farajtabar2019OrthogonalGD}. Building on the properties of overparameterized neural networks \citep{li2018learning,azizan2018stochastic,azizan2019stochastic,allen2019convergence}, OGD preserves performance on previous tasks by storing model gradients for previous training examples and projecting later weight updates into the subspace orthogonal to those gradients. We choose to focus on OGD because it is simple, effective, and amenable to analysis. Furthermore, unlike other continual learning algorithms such as A-GEM \citep{Chaudhry2018EfficientLL}, it does not require re-computing forward passes on old tasks, meaning that there is no need to store raw datapoints. This reduces computation requirements and may be beneficial in a setting where privacy is important.
However, a key limitation of OGD is the memory cost of storing large numbers of gradients. For a model with $p$ parameters trained for $T$ iterations, the memory requirement is $\mathcal{O}(pT)$. For deep overparameterized networks, $p$ itself may be large, and the growth in the memory requirement with $T$ makes OGD ill-suited to learning over long time horizons.


To resolve the memory issues of OGD, this paper proposes a memory-efficient continual learning algorithm called \textit{sketched orthogonal gradient descent}---or SketchOGD. More specifically, we leverage \emph{sketching} algorithms \citep{Tropp} to compress the past model gradients required by OGD into a lower-dimensional representation known as a sketch.
Crucially, we can update the sketch efficiently (without needing to decompress and recompress gradients already encountered) such that 
SketchOGD can run online. In short, SketchOGD avoids storing the full history of past gradients by maintaining a sketch of a fixed size that may be updated cheaply and online.
To ensure that SketchOGD is a viable continual learning algorithm, we need to characterize how effective the sketching is in compressing past gradients. We tackle this question both theoretically and experimentally.
More specifically, \mbox{our contributions are as follows}:
\begin{enumerate}[leftmargin=0.62cm,itemsep=0pt,topsep=0pt]
    \item[\S~\ref{sec:algorithm}] We propose SketchOGD, a memory-efficient continual learning algorithm that leverages matrix sketching to address the memory problem of continual learning. In particular, we devise three variants, each employing a different sketching technique, suitable for different scenarios.
    \item[\S~\ref{sec:theory}] On the theoretical side, we provide formal guarantees on the approximation error of the three proposed variants of SketchOGD. These guarantees depend on the spectral structure of the matrix composed of past gradients and help to inform which sketching variant is best to use. 
    \item[\S~\ref{sec:experiment}] 
    We experimentally benchmark SketchOGD against existing memory-efficient OGD variants 
    and demonstrate that SketchOGD tends to outperform other OGD variants.
\end{enumerate}








\subsection{Related Work}
\label{sec:related}

\paragraph{Continual learning} Research on lifelong and continual learning focuses on solving the problem of catastrophic forgetting \citep{mccloskey, Ratcliff1990ConnectionistMO}. Some approaches are based on rehearsing old data/experiences whilst learning new tasks \citep{rehearsal,shin2017continual,rebuffi2017icarl,rolnick2019experience,van2020brain,pellegrini2020latent} or allocating neural resources incrementally for new tasks \citep{Rusu2016ProgressiveNN,valkov2018houdini,veniat2021efficient, ostapenko2021continual}, while others rely on regularizing training to preserve performance on earlier tasks \citep{Kirkpatrick2016OvercomingCF}. \textit{Orthogonal gradient descent} is one such regularization-based approach, which we discuss in Section \ref{sec:ogd}.
Other methods include elastic weight consolidation (EWC) \citep{Kirkpatrick2016OvercomingCF}, synaptic intelligence \citep{Zenke2017ContinualLT} and averaged gradient episodic memory (A-GEM) \citep{Chaudhry2018EfficientLL}.
\citet{PARISI201954} provide a more thorough review of recent developments.
A variant of OGD, known as ORFit, was recently proposed by \citet{ORFit} for the special setting of one-pass learning.

\paragraph{Matrix approximation} Many approaches have been proposed for approximating or compressing a matrix, including techniques based on sparsification and quantization \citep{achlioptas}, column selection \citep{selection}, 
dimensionality reduction \citep{halko} and approximation by submatrices \citep{submatrix}. One practical form of dimensionality reduction known as \textit{sketching} is introduced in Section \ref{sec:sketching}. \citet{halko} provide a brief survey of the matrix approximation literature.
Sketching has also been successfully employed for summarizing the curvature of neural networks for the purpose of out-of-distribution detection \citep{sharma2021sketching}, motivating its application to OGD.
\section{Background}


\subsection{Orthogonal Gradient Descent}
\label{sec:ogd}
As a continual learning algorithm, orthogonal gradient descent (OGD) attempts to preserve predictions on prior training examples as new training examples are encountered. To accomplish this, OGD stores the gradient of the model outputs for past training examples and projects future updates into the subspace orthogonal to these stored gradients. This locally prevents new updates from changing what the model outputs on previous tasks.

To be more precise, given a model $f$ with weights $w\,{\in}\, \R^p$, OGD stores the gradients $\nabla_w f(x_1,w), ..., \nabla_w f(x_n,w)$ for training examples $x_1,...,x_n$. This paper focuses on classification, and only the gradients of the ground-truth logit are stored which has been found to improve performance empirically
\cite{Farajtabar2019OrthogonalGD}. The $n$ gradient vectors are arranged into a matrix $G \in \R^{p \times n}$, and future weight updates are projected onto the orthogonal subspace $(\range G)^\perp$.
To perform the projection, one may form an orthonormal basis for $\range G$. Letting $\ortho G$ denote a matrix whose columns form such a basis, then the OGD weight update may be written as
\begin{equation*}
    w \gets w + (I - \ortho G(\ortho G)^\top) \Delta w.
\end{equation*}

\subsection{\rev{Memory Compression for OGD}}

A major limitation of OGD is the memory requirement of storing all $n$ past gradients, where the number of trained examples $n$ grows with the runtime of the algorithm. A simple solution is to only store a random subset of past gradients, in the hope that the corresponding submatrix of $G$ will accurately approximate the range of the full $G$ matrix. We call this method RandomOGD. 
Another solution called PCA-OGD has been proposed \citep{Doan2020ATA}, which stores only the top principal components of the gradients for each task. Similarly, ORFit \citep{ORFit} uses incremental PCA to derive a variant of OGD that is specialized for one-pass learning.

\rev{The core observation behind compressing the gradient matrix $G$ is that not all directions in $G$ are equally important. Figures 2 and 3 show that the singular values of the gradient matrices exhibit strong decay, so most of the ``energy" of $G$ is concentrated in a relatively low-dimensional subspace. This suggests that many directions in $\range G$ rarely contribute to preventing catastrophic forgetting across tasks. By focusing on the dominant directions, we can retain the part of the subspace that contributes most to OGD's core mechanism.

However, classical PCA used in PCA-OGD requires storing all gradients for a task to perform compression, which introduces a memory overhead that grows with task size and requires knowing task boundaries. Also, its memory to store the compressed results grows with each task, which makes it hard to
predict memory costs without knowing the number of tasks in advance.}

Incremental PCA~\citep{artac2002incremental, ozawa2008incremental, balsubramani2013fast} could be directly applied to OGD to compress the memory to a fixed size in an online manner. However, \rev{it typically involves repeated truncations, which can discard directions before their importance becomes apparent across tasks. Its} choice of which principal components to discard, and thus, the quality of compression, is sensitive to the order in which the training data is encountered. The next section introduces an alternative means of approximation that avoids these issues.

\subsection{Matrix Approximation via Sketching}
\label{sec:sketching}






To reduce the memory cost of OGD, we use \textit{sketching}, which approximates a matrix in lower dimensions.
\rev{Our sketching methods update a fixed-size summary online as each new gradient arrives, independent of the total number of tasks or samples---crucial in continual learning scenarios with long, unknown horizons. They also enjoy linearity properties that make the final sketch equivalent to sketching the full gradients at once. Moreover, sketching is simple to implement and comes with well-developed approximation guarantees that we utilize in our analysis in Section~\ref{sec:theory}.
}

\begin{definition}
\label{def:sketch}
A \emph{sketch} $(\Omega, \Psi, Y, W)$ of a matrix $A \in \R^{m\times n}$ is an approximation formed by first drawing two i.i.d.\ standard normal matrices $\Omega \in \R^{n\times k}$ and $\Psi \in \R^{l\times m}$ for $k\leq m$ and $l\leq m$ with $k\le l$, and then forming the products:
$
Y = A\Omega \in \R^{m\times k}, 
W = \Psi A \in \R^{l\times n}.
$
\end{definition}

A sketch can form an approximation $\widehat{A}\in\R^{m\times n}$ to the original matrix $A\in\R^{m\times n}$. For example:

\begin{informalprop}[Direct approximation]
\label{prop:low-rank-approx}
Given a sketch $(\Omega, \Psi, Y, W)$ of a matrix $A\in\R^{m\times n}$, and letting $Q=\ortho Y\in\R^{m\times \rev{\rank(Y)}}$, $\widehat{A} \defeq Q (\Psi Q)^\dagger W$ approximates $A$\rev{, where $(\cdot)^\dagger$ denotes the pseudoinverse}.
\end{informalprop}
Rigorous justification for this is given by \citet{Tropp}. The quality of the approximation improves with the size of the sketch set by $k$ and $l$. Especially, $k$ determines the rank of $\widehat{A}$.

For the purposes of OGD, we only need to know the range of the gradient matrix. So, we want to extract the range of $A$ from its sketch. While using $\range \widehat A$ would seem natural, care must be taken that $\widehat A$ does not include spurious directions in its range, \emph{i.e.}, $\range \widehat A \not\supset \range A$. In fact, we have:
\begin{equation*}
    \range \widehat{A} \subseteq \range Q = \range Y  \subseteq \range A.
\end{equation*}
So, $\range Q$ alone approximates $\range A$. When $A$ is symmetric, we can use another approximation:



\begin{informalprop}[Symmetric approximation]
\label{prop:symm-approx}
Given a sketch $(\Omega, \Psi, Y, W)$ of a symmetric matrix $A\in\R^{m\times m}$, let $Q=\ortho Y\in\R^{m\times \rev{\rank(Y)}}$ and define $X \defeq (\Psi Q)^\dagger W\in\R^{\rev{\rank(Y)}\times m}$. Then, $\widehat{A}_\mathrm{sym} \defeq \half(Q X + X^\top Q^\top)$ approximates $A$.
\end{informalprop}
Formal justification is again given by \citet{Tropp}. To approximate $\range A$, we can use the range of the concatenation $[Q\; X^\top]$ from this proposition as we observe that:
\begin{align*}
    \range \widehat A_\mathrm{sym} 
    & \subseteq \range Q \cup \range X^\top\\
    & \subseteq \range Y \cup \range W^\top\\
    & \subseteq \range A \cup \range A^\top = \range A,
\end{align*}
where the last statement follows since $A$ is symmetric.



\begin{algorithm}[t]
\caption{SketchOGD. The algorithm requires a model with weight vector $w$, a base optimizer such as SGD, as well as sketching subroutines from Sketching Method \ref{alg:method1}, \ref{alg:method2} or \ref{alg:method3}. }
   \label{alg:applied-sketch}
\begin{algorithmic}
   \STATE $\mathtt{InitializeSketch}()$
   \FOR{{\bfseries task in $\mathbf{\{1,2,...,T\}}$}}
   \IF {task $\ge 2$}
   \STATE $B \gets \mathtt{ExtractBasisFromSketch}()$
   \ELSE
   \STATE {$B\gets 0$}
   \ENDIF
   \FOR{{\bfseries each training iteration}}
   \STATE {$\Delta w\gets \mathtt{BaseOptimizerUpdate}()$}
   \STATE $w \gets w + (I-BB^\top)\Delta w$
   \ENDFOR
   \FOR{{\bfseries each of $\mathbf{s}$ random training points}}
   \STATE \COMMENT{Let $g$ denote the gradient of the correct logit}
    \STATE $\mathtt{UpdateSketch}(g)$
  \ENDFOR
  \ENDFOR
\end{algorithmic}
\end{algorithm}
\begin{algorithm}[t]
\makeatletter
\renewcommand*{\ALG@name}{Sketching Method}
\setcounter{algorithm}{0}
\makeatother
\caption{Sketch of gradient matrix $G$.}
   \label{alg:method1}
\begin{algorithmic}
\STATE {\bfseries def} $\mathtt{InitializeSketch}()$:
\bindent
\STATE Set $Y\in \mathbb{R}^{p \times k}$ to zero
\eindent
\STATE {\bfseries def} $\mathtt{UpdateSketch}(g)$:
\bindent
\STATE Draw $\omega\in \mathbb{R}^{k}$ i.i.d.\ standard normal
\STATE $Y \gets Y + g\omega^\top$
\eindent
\STATE {\bfseries def} $\mathtt{ExtractBasisFromSketch}()$:
\bindent
\RETURN $\ortho Y$
\eindent
\end{algorithmic}
\end{algorithm}
\begin{algorithm}[t]
\makeatletter
\renewcommand*{\ALG@name}{Sketching Method}
\makeatother
\caption{Sketch of product $GG^\top$.}
   \label{alg:method2}
\begin{algorithmic}
\STATE {\bfseries def} $\mathtt{InitializeSketch}()$:
\bindent
\STATE Set $Y \in \mathbb{R}^{p \times k}$ to zero
\STATE Draw $\Omega \in \mathbb{R}^{p \times k}$ i.i.d.\ standard normal
\eindent

\STATE {\bfseries def} $\mathtt{UpdateSketch}(g)$:
\bindent
\STATE $Y \gets Y + g(g^\top\Omega)$
\eindent

\STATE {\bfseries def} $\mathtt{ExtractBasisFromSketch}()$:
\bindent
\RETURN $\ortho Y$
\eindent
\end{algorithmic}
\end{algorithm}
\begin{algorithm}[t]
\makeatletter
\renewcommand*{\ALG@name}{Sketching Method}
\makeatother
\caption{Symmetric sketch of product $GG^\top$.}
   \label{alg:method3}
\begin{algorithmic}
\STATE {\bfseries def} $\mathtt{InitializeSketch}()$:
\bindent
\STATE Set $Y \in \mathbb{R}^{p \times k}$ and $W\in \mathbb{R}^{l \times p}$ to zero
\STATE Draw $\Omega \in \mathbb{R}^{p \times k}$ and $\Psi \in \mathbb{R}^{l \times p}$ i.i.d.\ standard normal
\eindent

\STATE {\bfseries def} $\mathtt{UpdateSketch}(g)$:
\bindent
\STATE $Y \gets Y + g(g^\top\Omega)$
\STATE $W \gets W + (\Psi g)g^\top$
\eindent

\STATE {\bfseries def} $\mathtt{ExtractBasisFromSketch}()$:
\bindent
\STATE $Q \gets \ortho Y$
\STATE $U,T \gets \mathtt{QR}(\Psi Q)$
\STATE $X \gets T^\dagger (U^\top W)$
\RETURN $\ortho[Q\; X^\top]$
\eindent
\end{algorithmic}
\end{algorithm}

\rev{We note that the magnitudes of the gradients in $G$ play an important role in forming the approximations. In the case of \emph{full-memory} OGD, i.e., when we have the luxury of storing all the directions of $G$, the magnitudes are redundant for preserving performance. However, under a fixed memory budget where storing all gradients is infeasible, the magnitudes provide crucial information for deciding which directions to retain. In particular, a larger magnitude of the gradient implies that a small model parameter update along that direction would alter the model output more significantly. Thus, the gradients with larger magnitudes become more important for mitigating overall catastrophic forgetting. Sketching methods naturally exploit this magnitude information by weighting directions accordingly.}
\vspace{-0.cm}\section{Sketched Orthogonal Gradient Descent}
\label{sec:algorithm}

This section applies the matrix sketching methods introduced in Section \ref{sec:sketching} to obtain three variants of \textit{sketched orthogonal gradient descent} (SketchOGD). We propose different variants as it is a priori unclear which performs best, but we will investigate this both by presenting theoretical bounds (Section~\ref{sec:derivation}) and by running experiments (Section~\ref{sec:experiment}). All variants adopt the following pattern:
\begin{enumerate}[
    leftmargin=.4cm,
    itemindent=0cm,
    itemsep=0cm,
    labelsep=.1cm,
    labelwidth=.6cm]
    \item Before training on a new task, use the maintained sketch to extract an orthonormal basis that approximately spans $\range G$, where $G$ is the matrix of gradients from previous tasks. 
    \item While training on the new task, project weight updates to be orthogonal to the orthonormal basis.
    \item After the training, calculate gradients of the model's correct logit and update the sketch.
\end{enumerate}



This procedure is described formally in Algorithm \ref{alg:applied-sketch}. Three variants are obtained by using either Sketching Method \ref{alg:method1}, \ref{alg:method2} or \ref{alg:method3} to supply the subroutines $\mathtt{InitializeSketch}$, $\mathtt{UpdateSketch}$ and $\mathtt{ExtractBasisFromSketch}$. The variants arise since there is freedom in both which matrix is sketched and what sketching technique is used. For instance, since $G$ and $GG^\top$ have the same range, either matrix may be sketched to approximate $\range G$. Furthermore, since $GG^\top$ is symmetric, Proposition \ref{prop:symm-approx} may be applied.
To elaborate, we propose three variants of SketchOGD:

\textbf{SketchOGD-1:} \textit{Algorithm \ref{alg:applied-sketch} + Sketching Method \ref{alg:method1}.}
This algorithm directly sketches the gradient matrix $G$ via Proposition \ref{prop:low-rank-approx}, and $\ortho Y$ for $Y = G \Omega$ forms the orthonormal basis. When a new gradient $g$ is received, we update $Y$ online instead of recomputing it from scratch since $Y^{(new)} = G^{(new)}\Omega^{(new)} = [G\; g] \begin{bsmallmatrix}\Omega\\ \omega^\top\end{bsmallmatrix} = Y + g \omega^\top$. Here, the random matrix $\Omega$ is incremented with a standard normal vector $\omega$ since the dimension of $G$ is changed. In terms of memory, SketchOGD-1 needs to store only $Y$ at a cost of $pk$.


\textbf{SketchOGD-2:} \textit{Algorithm \ref{alg:applied-sketch} + Sketching Method \ref{alg:method2}.} This algorithm sketches the product $GG^\top$ via Proposition \ref{prop:low-rank-approx} as in SketchOGD-1. To update the sketch variable $Y = GG^\top \Omega$ online when a new gradient $g$ is received, we leverage the decomposition $Y^{(new)} = G^{(new)}{G^{(new)}}^\top \Omega = \sum_{c\in G} cc^\top \Omega = Y + gg^\top \Omega$
where the summation is running over the columns of $G$. In terms of memory, SketchOGD-2 needs to store both $Y$ and $\Omega$ at a cost of $2pk$.


\textbf{SketchOGD-3:} \textit{Algorithm \ref{alg:applied-sketch} + Sketching Method \ref{alg:method3}.} This algorithm sketches the product $GG^\top$ via symmetric sketch (Proposition \ref{prop:symm-approx}). The sketch variables $Y$ and $W$ are updated online using the decomposition of $GG^\top$ as in SketchOGD-2. The computation of the pseudoinverse $(\Psi Q)^\dagger$ is expedited by first taking the orthogonal-triangular ($\mathtt{QR}$) decomposition. The full sketch needs to be stored, at a memory cost of $2p(l+k)$. Since $l\geq k$, this is at least four times that of SketchOGD-1. However, the basis returned by Sketching Method \ref{alg:method3} spans a higher-dimensional subspace.

Overall, SketchOGD-1 is at least twice as memory efficient as the other two methods, so we expect by default for it to do well under the same memory constraint. However, SketchOGD-2,3 may have a higher quality sketch by leveraging the symmetric structure of $GG^\top$.

\rev{As for computational complexities, SketchOGD-1,2 require $O(pk)$ for updating the sketch and $O(pk^2)$ for extracting the orthonormal basis from the sketch via QR decomposition. For SketchOGD-3, updating the sketch costs $O(p(l+k))$. Extracting the basis involves QR decompositions of $Y$ and $\Psi Q$, and subsequent matrix multiplications to compute $X$. This yields a total cost on the order of $O(p(l+k)k)$. Note that $k$ and $l$ are chosen to be much smaller than the number of parameters $p$, so these additional costs are modest compared to the total training time dominated by backpropagation.}







\section{Bounds on Sketching Accuracy}
\label{sec:theory}

This section theoretically analyzes the suitability of the sketching methods for use in SketchOGD.

\subsection{Constructing a Metric Suited to SketchOGD}
To assess the effectiveness of a sketching method, an error metric is needed. In continual learning, the goal is to preserve performance on all data points previously encountered. Therefore, in the context of SketchOGD, it makes sense to measure sketching accuracy as a sum of errors incurred on each past data point. We propose the following metric:
\begin{definition}[Error in a sketching method]\label{def:metric} Given a matrix $G$ to be approximated and \rev{an orthonormal basis} matrix $B$ returned by a sketching method, the reconstruction error is given by:
\begin{equation*}    \mathcal{E}_G(B)=\sum_{g\in G} \|g-\proj_{B}g\|_2^2 = \|\rev{(I-BB^\top)} G\|_F^2,
\end{equation*}
where the summation is running over the columns of G\rev{, and $\proj_{M}$ for a matrix $M$ denotes the projection onto $\range M$}. In SketchOGD, if the metric vanishes while $\range B \subseteq \range G$, then SketchOGD is equivalent to OGD.
\end{definition}

Let us distinguish Definition \ref{def:metric} from other possible error metrics. A common way to assess error in the sketching literature is to use a sketch to compute an approximation $\widehat{A}$ to a matrix $A$, and then to report the Frobenius reconstruction error $\|\widehat{A}-A\|_F$. The advantage of Definition \ref{def:metric} over this is that it decouples the matrix that is sketched from the matrix $G$ that we would like to reconstruct. In particular, the matrix $B$ may be obtained from a sketch of $GG^\top$ rather than $G$ itself.


Another possible metric is the Grassmann distance between subspaces \citep{grassmann}, which could be applied to measure the distance between the subspace $\range G$ and the subspace $\range B$, where $B$ is returned by the sketching method. The drawback to this approach is that it neglects how the columns $\{g\in G\}$ are distributed within $\range G$. If certain directions appear more frequently, then it is more important for OGD that these directions are well-modeled.

\subsection{Deriving Bounds}
\label{sec:derivation}

We now provide a novel theoretical analysis of sketching applied to continual learning, through bounding our metric. While \citet{Tropp} analyzed the Frobenius reconstruction error of sketching $\mathbb{E}\|A-\widehat{A}\|_F^2$, and \citet{Doan2020ATA} bounded catastrophic forgetting, to our knowledge, we are the first to theoretically analyze the disruption a compression method causes to OGD. 

Bounding our metric will depend on the sketching method involved. For the basic, first sketching method, we can adapt a result of Halko. However, for the second and third methods, which sketch $GG^\top$ instead of $G$, we state and prove original bounds.
In all three cases, our bounds will use some shared notation, including decomposing $G$ and splitting matrices along an index $\gamma$.

\begin{definition}[Split SVD] \label{def:thm-setup}
    
Define $U\in\R^{p\times p},\Sigma_G\in\R^{p\times n},V\in\R^{n\times \rev{n}}$ as the singular value decomposition (SVD) of $G$ such that $G = U \Sigma_G V$, where $n$ is the number of sketched gradients. Then, we can define $\Sigma \defeq \Sigma_G \Sigma_G^\top\in\R^{p\times p}$ such that $GG^\top = U \Sigma U^\top$. Assuming singular values sorted in decreasing magnitude, we define $\Sigma_1,\Sigma_2, U_1, U_2$ by splitting matrices at an index $\gamma \le k-2$: 
\begin{equation*}
    \Sigma = \begin{bmatrix}
 \Sigma_1 & 0\\
0 & \Sigma_2 
\end{bmatrix}, \qquad U = \begin{bmatrix}
 U_1 & U_2 
\end{bmatrix}.
\end{equation*}

\end{definition}

Theoretical bounds on our metric based on this split would imply their dependence on the singular value decay of the gradients.
In the basic sketching method where we directly sketch the matrix of gradients $G$, we can adapt a theorem of \cite{halko} as below. See the appendix for the proof. 
\begin{restatable}[Expected error in Sketching Method \ref{alg:method1}]{thm}{theoremone}
\label{thm:method-1}
Given that we create \rev{a sketch of $G$ with a parameter $k$ and obtain an orthonormal basis matrix $B$} in accordance with Sketching Method \ref{alg:method1}, and use the setup described in Definition \ref{def:thm-setup}, then the expected value of the metric $\mathcal{E}_G(\rev{B})$ has an upper bound:
\begin{equation*}
\mathbb{E}[\mathcal{E}_G(\rev{B})] \le \min_{\gamma\leq k-2} (1+\tfrac{\gamma}{k-\gamma-1})\cdot\Tr{\Sigma_2}.
\end{equation*}
\end{restatable}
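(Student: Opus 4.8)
The plan is to recognize that Theorem~\ref{thm:method-1} is, after a change of notation, precisely the average-case error bound for the randomized range finder analyzed by \citet{halko}. Under Sketching Method~\ref{alg:method1} the maintained variable is $Y = G\Omega$ with $\Omega\in\R^{n\times k}$ i.i.d.\ standard normal (the online updates $Y \gets Y + g\omega^\top$ accumulate to exactly this product, with $n$ the number of sketched gradients), and the returned basis is $B = \ortho Y$. Hence $\proj_B = QQ^\top$ is the orthogonal projection onto $\range(G\Omega)$ for $Q = \ortho(G\Omega)$, and the metric of Definition~\ref{def:metric} is exactly
\[
\mathcal{E}_G(\widehat{G}) = \fnormsq{(I - QQ^\top)G},
\]
the squared Frobenius error of approximating $G$ by its projection onto the range of a width-$k$ Gaussian sketch. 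This is the quantity controlled by \citet{halko}.

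With this identification, I would invoke the squared-Frobenius form of their bound. Fixing a target rank $\gamma$ and reading the sketch width $k$ as the number of samples, the oversampling parameter is $k-\gamma$, and for $k-\gamma\ge 2$ their analysis gives
\[
\mathbb{E}\fnormsq{(I - QQ^\top)G} \le \bigl(1 + \tfrac{\gamma}{k-\gamma-1}\bigr)\textstyle\sum_{j>\gamma}\sigma_j(G)^2,
\]
where $\sigma_j(G)$ are the singular values of $G$. By Definition~\ref{def:thm-setup}, $\Sigma = \Sigma_G\Sigma_G^\top$ carries the squared singular values $\sigma_j(G)^2$ on its diagonal, and $\Sigma_2$ is the trailing block after splitting at $\gamma$, so $\sum_{j>\gamma}\sigma_j(G)^2 = \Tr\Sigma_2$. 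This yields the per-$\gamma$ bound.

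Finally, the inequality holds for every $\gamma$ with $\gamma\le k-2$, which is exactly the regime keeping the oversampling $k-\gamma\ge 2$ and hence the denominator $k-\gamma-1\ge 1$ positive. Taking the minimum over all such $\gamma$ produces the stated result, the minimization encoding the usual tradeoff: enlarging $\gamma$ shrinks the tail $\Tr\Sigma_2$ but inflates the prefactor $1 + \gamma/(k-\gamma-1)$.

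I expect the only substantive steps beyond bookkeeping to be the two identifications. The first is verifying that $\proj_B$ coincides with projection onto $\range(G\Omega)$ and that the sampling conventions (right multiplication by a wide Gaussian, together with the width constraint $k\le p$) place us within the hypotheses of \citet{halko}. The main obstacle is that their theorem is customarily quoted for the expected residual norm rather than its square; I would instead use the squared version, which in fact arises as the natural intermediate estimate in their proof before Jensen's inequality is applied to pass to the unsquared norm. A small consistency check is also worthwhile: that the online regrowth of $\Omega$ (appending a fresh column $\omega$ per gradient) still yields, at the moment of analysis, a single i.i.d.\ standard normal $\Omega\in\R^{n\times k}$, so that the static bound applies verbatim.
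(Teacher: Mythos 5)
Your proposal follows essentially the same route as the paper's proof: the paper likewise rewrites the metric as $\mathbb{E}\|(G-QQ^\top G)\|_F^2$, applies Proposition 10.5 of \citet{halko}, identifies the tail sum of squared singular values with $\Tr\Sigma_2$, and minimizes over $\gamma\le k-2$. Your additional care---checking that the squared-Frobenius form of the Halko bound is the right one to invoke, and that the online updates reproduce a single Gaussian sketch $Y=G\Omega$---only makes explicit what the paper leaves implicit.
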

\begin{remark}
Note that sharper decays push the optimal split location $\gamma$ away from $0$. For intuition, consider a flat spectral decay such that $\Sigma=\lambda I_{p\times p}$ for some $\lambda>0$ and the identity matrix $I_{p\times p}$. Then, $\gamma=0$ attains the minimum of the upper bound $p\lambda$. On the other hand, if $\Sigma=\Sigma_G\Sigma_G^T$ has linearly decaying spectral decay from $2\lambda$ to $0$, then $\gamma=2(k{-}1){-}p$ attains the minimum approximately at $\frac{4\lambda}{p} (k{-}1)(p{-}k{+}1)$ for large enough $k$. Analysis can be found in the appendix. 
\end{remark}

We now provide two novel theorems for Sketching Method \ref{alg:method2}, one upper bounding the metric deterministically and the other in expectation. See the appendix for the proofs. 

\begin{restatable}[Deterministic error in Sketching Method \ref{alg:method2}]{thm}{theoremdet}
\label{thm:deterministic-bound}
We define $G,\Sigma_1,\Sigma_2, U_1, U_2$ as in Definition \ref{def:thm-setup} for a sketch parameter $k$ and split position $\gamma\le k-2$. We create a sketch $(\Omega, \Psi, Y, W)$ of $A = GG^\top$ and \rev{obtain an orthonormal basis matrix $B$} in accordance with Sketching Method \ref{alg:method2}. We use $\Omega\in \mathbb{R}^{p\times k}$ to define: $\Omega_1 := U_1^\top \Omega$, $\Omega_2 := U_2^\top \Omega$.
Assume $\Omega_1\in \mathbb{R}^{\gamma \times k}$ is full-rank.
Then, if $\gamma$ is greater than the rank of $G$, then the metric $\mathcal{E}_G(\rev{B})=0$. Else,
\begin{equation*}
    \mathcal{E}_G(\rev{B}) \le \|\Sigma_2 \Omega_2 \Omega_1^\dagger \Sigma_1^{-1/2}\|_F^2 + \|\Sigma_2^{1/2}\|_F^2.
\end{equation*}
\end{restatable}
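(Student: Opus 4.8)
The plan is to bound the quantity that actually governs SketchOGD-2, namely $\mathcal{E}_G(\widehat{A}) = \fnormsq{(\Id - P)G}$, where $P = QQ^\top$ is the orthogonal projector onto $\range Q = \range Y$, $Q = \ortho Y$ is the extracted basis, and $Y = GG^\top \Omega = A\Omega$ is the maintained sketch. The crucial feature --- and what I expect to be the only delicate point --- is that we sketch $A = GG^\top$ but must reconstruct $G$: the eigenvalues of $A$ are the squared singular values of $G$, so a square root must be carried through the estimate, and it is exactly this that turns the $\Sigma_1^{-1}$ weighting of Halko's original bound into the $\Sigma_1^{-1/2}$ appearing in the statement. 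The rest is orthogonality bookkeeping.

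First I would exhibit an explicit matrix sitting inside $\range Y$ that approximately recovers the dominant left-singular subspace $U_1$ of $G$. Writing $A = U\Sigma U^\top$ and splitting at $\gamma$ as in Definition \ref{def:thm-setup}, one has $Y = U_1\Sigma_1\Omega_1 + U_2\Sigma_2\Omega_2$ with $\Omega_i = U_i^\top \Omega$. Setting $Z \defeq Y\Omega_1^\dagger \Sigma_1^{-1}$ and using that $\Omega_1$ has full row rank (so $\Omega_1\Omega_1^\dagger = \Id$) together with the invertibility of $\Sigma_1$ (valid in the non-degenerate case $\gamma \le \rank G$), I get $Z = U_1 + U_2 E$ with $E \defeq \Sigma_2\Omega_2\Omega_1^\dagger\Sigma_1^{-1}$. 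Since every column of $Z$, and hence of $ZC$ for any $C$, lies in $\range Y$, and $PG$ is the best Frobenius approximation of $G$ among matrices with columns in $\range Y$, I obtain $\fnorm{(\Id - P)G} \le \fnorm{G - ZC}$.

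The key step is the choice $C \defeq \Sigma_1^{1/2}V_1$, where $G = U\Sigma_G V$ is the SVD of $G$ split conformally into row-blocks $V = [\,V_1^\top\;\; V_2^\top\,]^\top$, and where the split relation $\Sigma_1 = \Sigma_{G,1}\Sigma_{G,1}^\top$ gives $\Sigma_1^{1/2} = \Sigma_{G,1}$. With this choice the dominant part cancels exactly, leaving $G - ZC = U_2(\Sigma_{G,2}V_2 - E\Sigma_1^{1/2}V_1)$. Using that $U_2$ has orthonormal columns and that the rows of $V$ are orthonormal ($V_1V_1^\top = \Id$, $V_2V_2^\top = \Id$, $V_1V_2^\top = 0$, so the cross term vanishes), the Frobenius norm splits as $\fnormsq{G - ZC} = \fnormsq{\Sigma_{G,2}} + \fnormsq{E\Sigma_1^{1/2}}$. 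Substituting $E\Sigma_1^{1/2} = \Sigma_2\Omega_2\Omega_1^\dagger\Sigma_1^{-1/2}$ and $\fnormsq{\Sigma_{G,2}} = \Tr{\Sigma_2} = \fnormsq{\Sigma_2^{1/2}}$ (both equal the sum of squared tail singular values) yields the claimed bound.

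Finally I would dispatch the degenerate case $\gamma > \rank G$ on its own, since there $\Sigma_1$ is singular and the formula is ill-defined. In that case $\Sigma_2 = 0$, so $Y = U_1\Sigma_1\Omega_1$ collapses to $U^{(r)}\Sigma_1^{>0}\Omega_1^{(r)}$ on the first $r = \rank G$ coordinates; because $\Omega_1$ has full row rank its first $r$ rows are independent, giving $\rank Y = r$ and hence $\range Y = \range(GG^\top) = \range G$. Then $PG = G$ and the metric vanishes, completing the statement.
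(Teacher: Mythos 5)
Your proof is correct, and its core construction coincides with the paper's: both arguments multiply the sketch $Y$ by $\Omega_1^\dagger\Sigma_1^{-1}$ to manufacture a matrix $Z$ lying inside $\range Y$ whose dominant block is exactly $U_1$ (the paper works in rotated coordinates $U^\top(\cdot)$, where its $Z = [\,I \;\; F^\top\,]^\top$ with $F$ equal to your $E$), and both dispatch the degenerate case $\gamma > \rank G$ by a range/rank argument. Where you genuinely diverge is the error-bounding step. The paper keeps the \emph{exact} orthogonal projector onto $\range Z$, expands $I - Z(Z^\top Z)^{-1}Z^\top$ in blocks, and controls it with positive-semidefinite calculus: Halko's Proposition 8.2 (namely $I - (I+F^\top F)^{-1}\preccurlyeq F^\top F$), the relaxation $I - F(I+F^\top F)^{-1}F^\top\preccurlyeq I$, conjugation by $\Sigma^{1/2}$, and monotonicity of the trace under the semidefinite order. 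You instead invoke the variational characterization of projection --- the residual $\|(\Id-P)G\|_F$ is at most $\|G - ZC\|_F$ for \emph{any} coefficient matrix $C$ whose product $ZC$ has columns in $\range Y$ --- and choose $C = \Sigma_1^{1/2}V_1$ so that the dominant part of $G$ cancels exactly and the cross term vanishes by row-orthogonality of $V$ (i.e., $V_1V_2^\top = 0$), after which the claimed bound is an identity for the competitor rather than the end of an inequality chain. Your route is shorter and more elementary: it needs no projector inversion, no PSD lemma, and it makes transparent where the two terms come from ($\|\Sigma_2^{1/2}\|_F^2$ is the unavoidable tail of $G$, and $\|\Sigma_2\Omega_2\Omega_1^\dagger\Sigma_1^{-1/2}\|_F^2$ is the contamination of the head direction by the tail). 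What the paper's heavier machinery reflects is the standard Halko-style template, which works with the optimal $C = (Z^\top Z)^{-1}Z^\top \widetilde{G}$ before relaxing and which also extends to spectral-norm statements; but for this Frobenius-norm bound both arguments terminate at exactly the same expression, so nothing is lost by your shortcut. One cosmetic caveat: your identification $\Sigma_1^{1/2} = \Sigma_{G,1}$ silently treats $\Sigma_{G,1}$ as its square diagonal block, but the paper's own SVD conventions carry the same looseness, so this is not a gap.
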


By evaluating the expected value of the bound in Theorem \ref{thm:deterministic-bound}, we get the following result.


\begin{restatable}[Exptected error in Sketching Method \ref{alg:method2}]{thm}{theoremexp}
\label{thm:expectation-bound}
We define $G,\Sigma_1,\Sigma_2, U_1$ as in Definition \ref{def:thm-setup} for a sketch parameter $k$ and arbitrary split position $\gamma\le k-2$. We create a sketch $(\Omega, \Psi, Y, W)$ of $A = GG^\top$ \rev{and obtain an orthonormal basis matrix $B$ in accordance with Sketching Method \ref{alg:method2}.} We assume $\Omega_1 := U_1^\top \Omega$ has full rank.
Then, if $\gamma$ is greater than the rank of $G$, then the metric $\mathbb{E}[\mathcal{E}_G(\rev{B})]=0$. Else,
\begin{equation*}
    \mathbb{E}[\mathcal{E}_G(\rev{B})]\le \min_{\gamma\le k-2} \tfrac{\gamma}{k-\gamma-1}\Tr(\Sigma_2^{2}) \Tr(\Sigma_1^{-1}) + \Tr(\Sigma_2).
\end{equation*}
\end{restatable}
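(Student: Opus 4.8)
The plan is to obtain the expectation bound by taking the expectation of the deterministic bound already established in \Cref{thm:deterministic-bound}. That bound reads $\mathcal{E}_G(\widehat A)\le \|\Sigma_2\Omega_2\Omega_1^\dagger\Sigma_1^{-1/2}\|_F^2 + \|\Sigma_2^{1/2}\|_F^2$. The second term is deterministic and equals $\Tr(\Sigma_2)$, so it passes through the expectation unchanged and accounts for the additive $\Tr(\Sigma_2)$ in the claim. The degenerate case where $\gamma$ exceeds the rank of $G$ is inherited directly from \Cref{thm:deterministic-bound}, so everything reduces to controlling $\mathbb{E}\|\Sigma_2\Omega_2\Omega_1^\dagger\Sigma_1^{-1/2}\|_F^2$, after which I would optimize over the split index $\gamma\le k-2$.

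The key structural observation is that $\Omega_1=U_1^\top\Omega$ and $\Omega_2=U_2^\top\Omega$ are \emph{independent} standard Gaussian matrices: since $U=[U_1\;U_2]$ is orthogonal, $U^\top\Omega$ is again a standard Gaussian matrix, and $\Omega_1,\Omega_2$ are disjoint row blocks of it. I would therefore condition on $\Omega_1$, treating $N\defeq\Omega_1^\dagger\Sigma_1^{-1/2}$ as fixed, and apply the elementary identity $\mathbb{E}\|M\Omega_2 N\|_F^2=\|M\|_F^2\,\|N\|_F^2$ for a standard Gaussian $\Omega_2$ and deterministic $M,N$ (proved by an entrywise variance computation). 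With $M=\Sigma_2$ this gives the conditional expectation $\Tr(\Sigma_2^2)\cdot\|\Omega_1^\dagger\Sigma_1^{-1/2}\|_F^2$, isolating the desired $\Tr(\Sigma_2^2)$ factor.

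It then remains to take the outer expectation of $\|\Omega_1^\dagger\Sigma_1^{-1/2}\|_F^2$ over $\Omega_1$. Using that $\Omega_1$ has full row rank so $\Omega_1^\dagger=\Omega_1^\top(\Omega_1\Omega_1^\top)^{-1}$, this norm rewrites as $\Tr\big((\Omega_1\Omega_1^\top)^{-1}\Sigma_1^{-1}\big)$. I would decouple $\Sigma_1^{-1}$ from the random factor via the trace inequality $\Tr(AB)\le\Tr(A)\Tr(B)$ for PSD $A,B$, yielding the bound $\Tr\big((\Omega_1\Omega_1^\top)^{-1}\big)\,\Tr(\Sigma_1^{-1})$, and then invoke the inverse-Wishart moment: $\Omega_1\Omega_1^\top$ is a $\gamma\times\gamma$ Wishart matrix with $k$ degrees of freedom, so $\mathbb{E}(\Omega_1\Omega_1^\top)^{-1}=\tfrac{1}{k-\gamma-1}I_\gamma$ and hence $\mathbb{E}\Tr\big((\Omega_1\Omega_1^\top)^{-1}\big)=\tfrac{\gamma}{k-\gamma-1}$. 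Assembling the pieces and minimizing over $\gamma$ produces exactly the stated bound.

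I expect the main obstacle to be the inverse-Wishart trace moment $\mathbb{E}\Tr\big((\Omega_1\Omega_1^\top)^{-1}\big)=\tfrac{\gamma}{k-\gamma-1}$, which is precisely where the hypothesis $\gamma\le k-2$ (equivalently $k-\gamma-1\ge 1$) is needed to guarantee finiteness; establishing it cleanly, or citing the chi-square/Wishart inverse-moment literature, is the crux of the argument. I would also note that the factor $\gamma$ originates from the PSD trace-decoupling step; the sharper exact evaluation $\mathbb{E}\Tr\big((\Omega_1\Omega_1^\top)^{-1}\Sigma_1^{-1}\big)=\tfrac{1}{k-\gamma-1}\Tr(\Sigma_1^{-1})$ would remove it, so the stated form is a deliberately simplified upper bound rather than the tightest one the method yields.
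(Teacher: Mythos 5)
Your proof is correct, and its skeleton is the same as the paper's: take the expectation of the bound in Theorem \ref{thm:deterministic-bound}, integrate out $\Omega_2$ conditionally on $\Omega_1$ via the Gaussian identity $\mathbb{E}\|M\Omega_2 N\|_F^2=\|M\|_F^2\|N\|_F^2$ (the paper cites this as Proposition 10.1 of \citet{halko} rather than re-proving it), rewrite $\|\Omega_1^\dagger\Sigma_1^{-1/2}\|_F^2=\Tr\bigl((\Omega_1\Omega_1^\top)^{-1}\Sigma_1^{-1}\bigr)$, and finish with the inverse-Wishart mean $\mathbb{E}\bigl[(\Omega_1\Omega_1^\top)^{-1}\bigr]=\tfrac{1}{k-\gamma-1}I_\gamma$ from \citet{muirhead}. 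Two local differences are worth recording. First, you explicitly justify splitting the expectation by observing that $\Omega_1=U_1^\top\Omega$ and $\Omega_2=U_2^\top\Omega$ are independent standard Gaussians (disjoint row blocks of $U^\top\Omega$ with $U$ orthogonal); the paper performs this split without comment, so you are filling a genuine, if small, gap. Second, in the last step you insert the PSD decoupling $\Tr(AB)\le\Tr(A)\Tr(B)$ and then use $\mathbb{E}\Tr\bigl[(\Omega_1\Omega_1^\top)^{-1}\bigr]=\tfrac{\gamma}{k-\gamma-1}$, which reproduces the stated factor $\tfrac{\gamma}{k-\gamma-1}\Tr(\Sigma_2^2)\Tr(\Sigma_1^{-1})$ exactly; the paper instead evaluates the expectation exactly by linearity, obtaining $\mathbb{E}\|\Omega_1^\dagger\Sigma_1^{-1/2}\|_F^2=\tfrac{1}{k-\gamma-1}\Tr(\Sigma_1^{-1})$, and then transcribes the final bound with a factor of $\gamma$ that its own computation never produced. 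So your closing remark is precisely on target: the argument (yours, if you drop the decoupling step; the paper's, as actually written up to its last line) proves the sharper constant $\tfrac{1}{k-\gamma-1}$, and the $\gamma$ in the theorem statement is slack rather than necessary. Since $\tfrac{1}{k-\gamma-1}\le\tfrac{\gamma}{k-\gamma-1}$ for $\gamma\ge 1$, and both first terms vanish when $\gamma=0$, either route establishes the theorem as stated.
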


\begin{remark} \label{rem:tighter_bound}
Note that this is a tighter bound than Theorem \ref{thm:method-1} when $\Tr(\Sigma_2^{2}) \Tr(\Sigma_1^{-1}) \le \Tr(\Sigma_2)$. The smaller the lower singular values are, and the larger the higher singular values are, the more advantageous this bound becomes. One illustrative example is a step-decaying spectral decay such that $\Sigma_1=100 I_{10\times 10}$ and $\Sigma_2=2 I_{100\times 100}$. Then, $\Tr(\Sigma_2^{2}) \Tr(\Sigma_1^{-1}) = 40 \le 200 = \Tr(\Sigma_2)$. 
\end{remark}


Sketching Method \ref{alg:method3} is a modification of Sketching Method \ref{alg:method2} to take advantage of the symmetric structure of $GG^\top$
\rev{via the symmetric sketch in Proposition~\ref{prop:symm-approx}. Sketching Method \ref{alg:method3} returns a basis that spans a larger subspace than that from Sketching Method \ref{alg:method2}. Any projection onto this larger subspace can only reduce the reconstruction error compared to Sketching Method \ref{alg:method2}. Thus,}
the metric for Sketching Method \ref{alg:method3} is always better than for Sketching Method \ref{alg:method2} as below, so we can transfer our previous results. See the appendix for the proof.

\begin{restatable}[Error in Sketching Method \ref{alg:method3}]{thm}{theoremthree}
\label{thm:symmetric} When sketching $A=GG^\top$ \rev{to obtain orthonormal bases $B$ and $B_\mathrm{sym}$} according to Sketching Methods \ref{alg:method2} and \ref{alg:method3}, respectively, the metrics obey: \begin{equation*}
    \mathcal{E}_G(\rev{B_\mathrm{sym}}) \le \mathcal{E}_G(\rev{B}).
\end{equation*}
\end{restatable}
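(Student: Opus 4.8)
The plan is to reduce the claim to a subspace inclusion and then to establish that inclusion using positive semidefiniteness. First I would record a monotonicity lemma for the metric: since $\mathcal{E}_G(B) = \fnormsq{(I-\proj_B)G}$ and $\proj_B$ is the orthogonal projector onto $\range B$, the quantity $\mathcal{E}_G(B)$ depends on $B$ only through $\range B$, and it is non-increasing as $\range B$ grows, because projecting each column $g$ onto a larger subspace can only shrink the residual $\|g-\proj_B g\|_2$ by Pythagoras. Consequently it suffices to prove $\range \widehat{A} \subseteq \range \widehat{A}_{\mathrm{sym}}$, after which monotonicity yields $\mathcal{E}_G(\widehat{A}_{\mathrm{sym}}) \le \mathcal{E}_G(\widehat{A})$.

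Next I would expose the common structure of the two reconstructions. Writing $A = GG^\top$, the online updates of Sketching Methods \ref{alg:method2} and \ref{alg:method3} produce $Y = A\Omega$ and $W = \Psi A$, so with $Q = \ortho Y$ and $X \defeq (\Psi Q)^\pinv W$ we have $\widehat{A} = QX = FA$ and $\widehat{A}_{\mathrm{sym}} = \half(QX + X^\top Q^\top) = \half(\widehat{A} + \widehat{A}^\top)$, where $F \defeq Q(\Psi Q)^\pinv \Psi$ and we used $A = A^\top$. Two facts drive the argument: (i) $F$ is idempotent, $F^2 = F$, whenever $\Psi Q$ has full column rank so that $(\Psi Q)^\pinv \Psi Q = I$, which I would take as the generic case and handle rank-deficiency separately; and (ii) $M \defeq FAF^\top \psdge 0$, since $A = GG^\top \psdge 0$.

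The core step is the inclusion $\range \widehat{A} \subseteq \range \widehat{A}_{\mathrm{sym}}$. As $\widehat{A}_{\mathrm{sym}}$ is symmetric, this is equivalent to $\nullsp(\widehat{A}_{\mathrm{sym}}) \subseteq \nullsp(\widehat{A}^\top)$, so I take any $w$ with $\widehat{A}_{\mathrm{sym}} w = 0$, i.e. $FAw = -AF^\top w$. Left-multiplying by $F$ and using $F^2 = F$ gives $FAw = -FAF^\top w = -Mw$, hence $AF^\top w = Mw$, i.e. $\widehat{A}^\top w = Mw$. Feeding this back, the scalar identity $w^\top \widehat{A} w = w^\top \widehat{A}^\top w = w^\top M w$ together with $0 = w^\top \widehat{A}_{\mathrm{sym}} w = w^\top \widehat{A} w$ forces $w^\top M w = 0$, and since $M \psdge 0$ this yields $Mw = 0$. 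Therefore $\widehat{A}^\top w = Mw = 0$, which establishes the null-space containment and hence the inclusion, completing the proof.

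The main obstacle is precisely this inclusion: because $\widehat{A} = QX$ is not symmetric, passing to its symmetric part could a priori destroy range through skew-symmetric cancellation, since for a general matrix $N$ one only has $\range N \not\subseteq \range(N+N^\top)$ in general. What rescues the argument is the positive semidefiniteness of $A = GG^\top$, funneled through $M = FAF^\top \psdge 0$, combined with the idempotency of the oblique projector $F$; these are the only places the special structure enters. The remaining work is routine: verifying the monotonicity lemma, confirming $Y = A\Omega$ and $W = \Psi A$ from the online updates, and dispatching the degenerate case where $\Psi Q$ loses column rank, where one restricts to $\range(\Psi Q)$ and the same computation carries through.
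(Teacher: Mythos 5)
Your proposal is correct, and it shares the paper's overall skeleton---reduce the claim to a range containment between the two reconstructions, then finish by monotonicity of $\mathcal{E}_G(\cdot)$ under enlargement of the projection subspace---but the way you establish the containment is genuinely different from what the paper does, and is where the comparison gets interesting. The paper simply asserts $\range(\widehat{A}_{\mathrm{sym}}) = \range([Q\;X^\top]) = \range(\widehat{A}) + \range(X^\top)$ and reads off the inclusion; this is best understood as identifying $\proj_{\widehat{A}_{\mathrm{sym}}}$ with projection onto the basis $\ortho[Q\;X^\top]$ that Sketching Method 3 actually returns, in which case the containment $\range\widehat{A} = \range(QX) \subseteq \range Q \subseteq \range[Q\;X^\top]$ is immediate by construction. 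Read literally, however, the paper's claimed equality is false for general $Q,X$---symmetrization can destroy range, exactly the skew-cancellation you flag (e.g.\ $QX = q_1q_2^\top - q_2q_1^\top$ has symmetric part zero while $[Q\;X^\top]$ has rank two)---and your proof supplies precisely the missing ingredient for the literal reading: writing $\widehat{A} = FA$ with $F \defeq Q(\Psi Q)^\pinv\Psi$ idempotent, and funneling $A = GG^\top \psdge 0$ through $M \defeq FAF^\top \psdge 0$, you show $\nullsp(\widehat{A}_{\mathrm{sym}}) \subseteq \nullsp(\widehat{A}^\top)$ and hence $\range\widehat{A} \subseteq \range\widehat{A}_{\mathrm{sym}}$ for the symmetric reconstruction $\half(QX + X^\top Q^\top)$ itself. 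So your argument proves the stronger (literal) statement about $\widehat{A}_{\mathrm{sym}}$, identifying positive semidefiniteness as the structural reason symmetrization is safe here, while the paper's argument proves the weaker (algorithmic) statement with essentially no work. One simplification to your write-up: the full-column-rank caveat on $\Psi Q$ is unnecessary, since $F^2 = Q\left[(\Psi Q)^\pinv(\Psi Q)(\Psi Q)^\pinv\right]\Psi = Q(\Psi Q)^\pinv\Psi = F$ by the Moore--Penrose identity $M^\pinv M M^\pinv = M^\pinv$ with no rank hypothesis, so the degenerate case you defer never needs separate treatment.
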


Consequently, the bounds described in Theorems \ref{thm:deterministic-bound} and \ref{thm:expectation-bound} are also upper bounds for the error metric $\mathcal{E}_G(\rev{B_\mathrm{sym}})$ and its expected value $\mathbb{E}[\mathcal{E}_G(\rev{B_\mathrm{sym}})]$, respectively.

\begin{remark}
Overall, these bounds suggest that the performance of different sketching methods may vary with the task-dependent structure of spectral decay. \rev{More sharply decaying spectrum of $GG^\top$ makes the bound for Sketching Methods~\ref{alg:method2} and \ref{alg:method3} in Theorem~\ref{thm:expectation-bound} tighter than that for Sketching Method~\ref{alg:method1} in Theorem~\ref{thm:method-1}. Conversely, when the spectrum decays slowly, Sketching Method~\ref{alg:method1} can be more favorable. Among Sketching Methods~\ref{alg:method2} and \ref{alg:method3}, the metric for Sketching Method~\ref{alg:method3} is always better than Sketching Method~\ref{alg:method2} as noted in Theorem~\ref{thm:symmetric}.}
\end{remark}

\rev{
\section{Analysis of Convergence and Catastrophic Forgetting}

In this section, we theoretically characterize the limiting parameter attained by SketchOGD on each task and analyze the catastrophic forgetting induced by the successive parameter updates along tasks.

Under the Neural Tangent Kernel (NTK) regime, \citet{bennani2020generalisation} and \citet{Doan2020ATA} interpret OGD as a recursive kernel regression and analyze its forgetting behavior. We adopt a similar approach and establish an explicit connection between the sketching reconstruction error $\mathcal{E}_G(B)$ and the catastrophic forgetting.

The key idea of the NTK regime is that, for a neural network with large enough width, the model is well-approximated by its first-order approximation around the initialization $w_0\in\mathbb{R}^p$:
\begin{equation}\label{eq:ntk}
    f(x,w) \approx f(x,w_0) + \nabla_w f(x,w_0)^\top (w-w_0),
\end{equation}
where the feature map $\nabla_w f(x,w)$ remains constant throughout training~\citep{JacotNTK, LeeNTK}. This yields a kernel regression viewpoint of learning and enables closed-form characterizations of the solutions reached by SketchOGD.

For each task $\tau\in\{1,2,\dots,T\}$, we train on the dataset $\{(x_{\tau,i}, y_{\tau,i})\}_{i=1}^{n_\tau}$, arranged as $\mathtt{X}_\tau\in\mathbb{R}^{d\times n_\tau}$, and $\mathtt{Y}_\tau\in\mathbb{R}^{n_\tau}$, with regularized squared loss function:
\begin{equation}\label{eq:loss}
    \mathcal{L}_\tau(w)=\frac{1}{n_\tau}\|f(\mathtt{X}_\tau,w)-\mathtt{Y}_\tau\|_2^2+\lambda\|w-w_{\tau-1}^*\|_2^2,
\end{equation}
where $f(\mathtt{X}_\tau,w)\in\mathbb{R}^{n_\tau}$ is an aggregated function evaluation on each input of $\mathtt{X}_\tau$.
Let $G_\tau\in\mathbb{R}^{p\times n_\tau}$ denote the matrix whose columns are the gradients for the $n_\tau$ training data. Prior to training task $\tau$, a sketching method constructs
$B_\tau$ based on the gradients of prior tasks $G_{1:\tau-1}$.
Denoting the orthogonal projection as $T_\tau:=I_p-B_\tau B_\tau^\top$, each $j$-th update step of SketchOGD can be represented as:
\begin{equation}\label{eq:sketchogd_step}
    w_\tau^{(j+1)} = w_\tau^{(j)} - \eta_\tau T_\tau \nabla_w \mathcal{L}_\tau(w_\tau^{(j)}),
\end{equation}
for a fixed step-size $\eta_\tau>0$.

Under this setting, we characterize the convergence behavior of SketchOGD as below. See the appendix for the proof.
\begin{restatable}[Convergence analysis]{thm}{theoremconv}
\label{thm:conv}
Suppose the NTK assumption~\eqref{eq:ntk} holds and the step-size $\eta_\tau$ is sufficiently small so that $0<\eta_\tau<\dfrac{1}{\lambda_{max}\big(T_\tau(\frac{1}{n_\tau}G_\tau G_\tau^\top+\lambda I_p)\big)}$ for all $\tau\in\{1,\dots,T\}$. Then, for each task $\tau$, the parameter updates $\{w_\tau^{(j)}\}_{j\geq0}$ of SketchOGD in~\eqref{eq:sketchogd_step} converge linearly to a limit solution $w_\tau^*$ such that
\begin{equation}\label{eq:param_conv}
    w_\tau^*-w_{\tau-1}^* = T_\tau G_\tau(G_\tau^\top T_\tau G_\tau+n_\tau\lambda I_{n_\tau})^{-1} \tilde{\mathtt{Y}}_\tau,
\end{equation}
where $\tilde{\mathtt{Y}}_\tau:=\mathtt{Y}_\tau-f(\mathtt{X}_\tau,w_{\tau-1}^*)$ and $w_0^*=w_0$.
\end{restatable}

The recursive characterization of the limiting parameters in~\eqref{eq:param_conv} enables a direct analysis of catastrophic forgetting. Similarly to the definition from~\citet{Doan2020ATA}, we define catastrophic forgetting on the training data as follows.
\begin{definition}[Catastrophic forgetting]
    Let $\tau_S$ and $\tau_T$ ($\tau_S<\tau_T$) be the source and target tasks, respectively. The catastrophic forgetting of task $\tau_S$ after training on all subsequent tasks up to task $\tau_T$ is
    \begin{equation*}
        CF_{\tau_S\rightarrow\tau_T} = \sum_{i=1}^{n_{\tau_S}} \big(f(x_{\tau_S,i},w_{\tau_T}^*) - f(x_{\tau_S,i},w_{\tau_S}^*)\big)^2,
    \end{equation*}
    where $w_{\tau_S}^*, w_{\tau_T}^*$ are the limiting parameters from Theorem~\ref{thm:conv}.
\end{definition}

Then, with the NTK approximation, the catastrophic forgetting admits the equivalent form:
\begin{align*}
    CF_{\tau_S\rightarrow\tau_T} &= \|G_{\tau_S}^\top(w_{\tau_T}^* - w_{\tau_S}^*)\|^2\\
    &= \big\|\sum_{\tau=\tau_S+1}^{\tau_T} G_{\tau_S}^\top(w_{\tau}^* - w_{\tau-1}^*)\big\|^2.
\end{align*}
Combining this identity with the recursive characterization of $w_{\tau}^*$ in~\eqref{eq:param_conv} yields the following results.
\begin{restatable}[Catastrophic forgetting of SketchOGD]{thm}{theoremcf}
Suppose the conditions of Theorem~\ref{thm:conv} hold. Then, for the source task $\tau_S$ and target task $\tau_T$ ($\tau_S<\tau_T$),
\begin{equation*}
    CF_{\tau_S\rightarrow\tau_T}
    = \big\|\sum_{\tau=\tau_S+1}^{\tau_T} G_{\tau_S}^\top T_\tau G_\tau(G_\tau^\top T_\tau G_\tau+n_\tau\lambda I)^{-1} \tilde{\mathtt{Y}}_\tau\big\|^2.
\end{equation*}
\end{restatable}

\begin{remark}
    For each task $\tau>\tau_S$, the reconstruction error defined in Definition~\ref{def:metric} upper bounds the magnitude of the cross-term $G_{\tau_S}^\top T_\tau$ since the error metric after task $\tau-1$ is
    \begin{equation*}
        \mathcal{E}_{G_{1:\tau-1}}(B_\tau)=\|T_\tau G_{1:\tau-1}\|_F^2 \geq \|G_{\tau_S}^\top T_\tau\|_F^2.
    \end{equation*}
    A small reconstruction error implies that $G_{\tau_S}$ is nearly contained in the sketched subspace, thereby leading to smaller catastrophic forgetting.
\end{remark}

}
\section{Experiments}
\label{sec:experiment}

In this section, we demonstrate the practical performance of SketchOGD on four continual learning benchmarks: Rotated MNIST, Permuted MNIST, Split MNIST, and Split CIFAR.

\paragraph{Setup}
The benchmarks are created using $1,000$ datapoints for each of the $10$ classes of MNIST, and $600$ datapoints for each of the $20$ classes of CIFAR-100. Rotated MNIST consists of $10$ tasks, created by copying and rotating the dataset by multiples of $5$ degrees. Similarly, Permuted MNIST forms $10$ tasks by creating fixed but random permutations of pixels for each task.
Split MNIST forms $5$ tasks by splitting the dataset into pairs of digits and treating each pair as a binary $0,1$ classification task. Similarly, Split CIFAR has $20$ tasks of $5$ classes each. Each task is trained for $30$ epochs.

\rev{Our primary goal is to compare methods at fixed memory budgets, which is the typical design constraint in continual learning scenarios.} For the setup of our proposed SketchOGD methods, the memory constraints allow $k=1200$ for Method 1, $k=600$ for Method 2, and $k=299, l=301$ for Method 3. We set $k\approx l$, which is empirically found to be more memory-efficient. Effectively, SketchOGD-1 forms a rank-$1200$ approximation, while SketchOGD-2 and SketchOGD-3 form around rank-$600$ approximations.

\paragraph{Network Architecture}
For MNIST experiments, we use a three-layer fully connected neural network that has $100$ hidden units in the first two layers with ReLU activations and $10$ output logits, totaling $p=113,610$ parameters. For Split CIFAR, we use a LeNet with a hidden-layer size of $500$ for a total of $1,601,500$ parameters. Unlike previous works \citep{Farajtabar2019OrthogonalGD,Doan2020ATA} that create unique heads for each task in Split MNIST, we keep the model fixed to make Split MNIST a harder problem. However, we keep the multi-head architecture for Split CIFAR.

\paragraph{Baselines}

We compare the SketchOGD methods with other continual learning methods under the same memory constraint. They include naive SGD for context, a variant of OGD which we call ``RandomOGD'' that randomly samples gradients to obey the memory constraint, PCA-OGD proposed by \cite{Doan2020ATA} that compresses each task's gradients using PCA, and OGD without memory constraint.
We expect naive SGD to perform drastically worse than all other methods because of catastrophic forgetting. RandomOGD should serve as a lower bound, as random sampling is the simplest form of compression. Unconstrained OGD serves as the theoretical maximum if compression methods were lossless, and PCA-OGD is the current state-of-the-art memory-efficient OGD variant.

It is important to note that PCA-OGD is not an online algorithm within a task. It requires temporarily storing all the gradients for a task at once into a big matrix. Then, it performs singular value decomposition and stores the top left singular vectors as a compression. If a task is large enough, storing all the gradients could easily break the memory constraint. Therefore we run two different scenarios; first, a practical one where PCA-OGD accounts for this fault so that it cannot summarize the full dataset while SketchOGD can, taking advantage of its online nature; then, a toy scenario where PCA-OGD and SketchOGD are set to compress the same number of gradients regardless of the memory issue of PCA-OGD to compare their compression quality from the same ingredients.

\subsection{Experiment 1: All Methods under the Same Memory Constraint}

\begin{table}
\centering
\caption{Test accuracy after training on four different continual learning benchmarks, under a fixed memory constraint. A-GEM is shown separately, representing a commonly used algorithm that allows storing previous datapoints across tasks. The mean and standard deviation over 5 runs for each algorithm-benchmark pair are indicated, and exceptional results are bolded. The SketchOGD methods generally outperform the other variants of OGD, while being comparable to each other.}
\setlength{\tabcolsep}{2pt}
\resizebox{\columnwidth}{!}{%
\begin{tabular}{l c c c c}
\toprule
\textbf{Algorithm} & \textbf{Rotated} & \textbf{Permuted} & \textbf{Split MNIST} & \textbf{Split CIFAR}\\
\midrule
SketchOGD-1 &  $0.865\pm 0.004$ & $\textbf{0.832}\pm 0.004$ & ${0.679}\pm 0.004$ & ${0.676}\pm 0.002$\\
SketchOGD-2 & $\textbf{0.876} \pm 0.004$ & $0.826 \pm 0.004$ & $0.675 \pm 0.007$ & $0.672 \pm 0.002$\\
SketchOGD-3 & $\textbf{0.875}\pm 0.006$ & $0.825\pm 0.002$ & ${0.676}\pm 0.006$ & $0.673\pm 0.002$\\
PCA-OGD & $0.849\pm 0.004$ & $0.821\pm 0.004$ & $0.673 \pm 0.001$ & ${0.674} \pm 0.002$\\
RandomOGD & $0.847\pm 0.007$ & $0.819\pm 0.002$ & $0.667 \pm 0.004$ & $0.672 \pm 0.002$\\
SGD & $0.792\pm 0.009$ & $0.734\pm 0.019$ & $0.604 \pm 0.003$ & $0.649 \pm 0.002$\\
\midrule
A-GEM & $0.902 \pm 0.002$ & $0.887 \pm 0.001$ & $0.905 \pm 0.001$ & $0.379 \pm 0.024$\\
\bottomrule
\end{tabular}
}
\label{table:unlimited-sketch}
\end{table}

\begin{figure}[t]
  \centering
  \includegraphics[width=.9\columnwidth]{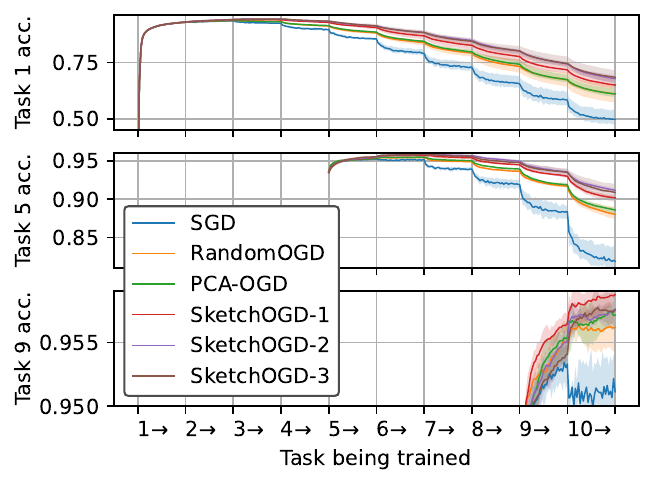}
  \caption{Accuracy of various continual learning methods on tasks 1, 5, and 9 of the Rotated MNIST benchmark. Each plot shows the initial learning of a task and the subsequent slow forgetting as the model trains on new tasks. As shown, SketchOGD-2,3 perform the best in this scenario.}
  \label{fig:rotated_graph}
\end{figure}
\begin{figure}[t]
  \centering
  \includegraphics[width=0.6\columnwidth]{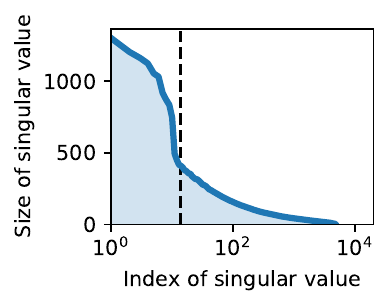}
  \caption{Plotted singular values of $4800$ gradient vectors extracted from training OGD on the Rotated MNIST benchmark. The x-axis is placed on a log scale for easier viewing, as the decay is too fast to make sense of otherwise. The stable rank of the matrix is shown as a dotted black line, which can be viewed as an effective rank. As shown, even taking a small number of the top singular vectors ($\approx 300$) would encode the majority of the information.}
  \label{fig:spectral_decay}
\end{figure}




Table \ref{table:unlimited-sketch} shows the first scenario's results. As sketching more gradients does not change memory costs in SketchOGD, we allow SketchOGD to sketch the full dataset, namely $100,000, 10,000$ and $60,000$ gradients, respectively, for Rotated/Permuted MNIST, Split-MNIST, and Split-CIFAR, down to $1200$ vectors. Meanwhile, as PCA-OGD takes extra memory space to run, it can only compress $2000, 1500$, and $4000$ gradients down to $1000, 900$ and $1000$ vectors, respectively, for the corresponding benchmarks under the same memory constraint. This experiment shows that the SketchOGD methods are comparable to each other, while outperforming PCA-OGD significantly. In addition, the popular continual learning algorithm A-GEM is included for reference, performing well for MNIST benchmarks but extremely poorly for Split CIFAR. Unlike the variants of OGD, A-GEM directly stores datapoints across tasks, which might raise privacy concerns.
Figure \ref{fig:rotated_graph} graphs the test accuracy of tasks $1,5,9$ of the Rotated MNIST benchmark along the training. RandomOGD acts as a simple lower bound for the OGD variants, and the SketchOGD methods outperform PCA-OGD.

\subsection{Experiment 2: Compressing the Same Number of Gradients}

\begin{table}
\centering
\caption{Performance comparison in a toy scenario where each method compresses the same number of gradients; this artificially restricts the capability of SketchOGD to compress arbitrarily many gradients without memory overhead while relatively boosting PCA-OGD. Unlike other methods, OGD is allowed to store all gradients without compression to form an upper bound of lossless compression. The mean and standard deviation of test accuracy over 5 runs for each algorithm-benchmark pair are indicated, and exceptional results are bolded. The SketchOGD methods show comparable performance to PCA-OGD despite the restriction.}
\setlength{\tabcolsep}{2pt}
\resizebox{\columnwidth}{!}{%
\begin{tabular}{l c c c c}
\toprule
\textbf{Algorithm} & \textbf{Rotated} & \textbf{Permuted} & \textbf{Split MNIST} & \textbf{Split CIFAR}\\
\midrule
SketchOGD-1 &  ${0.863}\pm 0.005$ & $\textbf{0.830}\pm 0.002$ & $\textbf{0.684}\pm 0.007$ & ${0.674}\pm 0.003$\\
SketchOGD-2 & ${0.860}\pm 0.006$ & $0.818\pm 0.005$ & $0.673\pm 0.006$ & ${0.675}\pm 0.002$\\
SketchOGD-3 & ${0.862}\pm 0.007$ & $0.812\pm 0.003$ & $0.677\pm 0.003$ & ${0.675}\pm 0.001$\\
PCA-OGD & ${0.864}\pm 0.004$ & $\textbf{0.830}\pm 0.002$ & $0.678 \pm 0.006$ & ${0.675}\pm 0.002$\\
RandomOGD & $0.847\pm 0.007$ & $0.819\pm 0.002$ & $0.667 \pm 0.004$ & $0.672 \pm 0.002$\\
\midrule
OGD & $0.888\pm 0.006$ & $0.838\pm 0.004$ & $0.714 \pm 0.002$ & $0.675\pm 0.006$\\
\bottomrule
\end{tabular}%
}
\label{table:accuracies}
\end{table}

Next, we restrict SketchOGD methods to only being fed in the same number of gradients to compress as PCA-OGD. This is a severe disadvantage as PCA-OGD has a memory overhead to store a full matrix of gradients pre-compression that SketchOGD does not need. Table~\ref{table:accuracies} shows the results where each method compresses $4800$ gradients down to $1200$ vectors, except for OGD, which stores all gradients without compression. In this experiment, we aim to understand how well sketching can compress data in comparison to PCA, as well as the effects of sketching different amounts of gradients on SketchOGD's results. The sketching methods on balance outperformed RandomOGD and naive SGD, while underperforming the upper bound OGD. Compared to the results in Table~\ref{table:unlimited-sketch}, while SketchOGD-2,3 drop in performance, SketchOGD-1 is not significantly affected by the reduced number of sketched gradients. Further, even SketchOGD-2,3 are still comparable to PCA-OGD on most benchmarks.

\section{Discussion}
\vspace{-.2cm}
In this section, we discuss the practical performance of the proposed SketchOGD methods for memory-efficient continual learning.
Comparing them, no one dominates the others, while all tend to outperform PCA-OGD as seen in Table \ref{table:unlimited-sketch}.
SketchOGD-2 and SketchOGD-3 exhibit similar behavior, indicating that the doubled memory cost of SketchOGD-3 is mitigated by its use of symmetric structure. Meanwhile, SketchOGD-2 outperforms SketchOGD-1 in Rotated MNIST despite its higher memory cost. This aligns with the tighter theoretical error bounds for SketchOGD-2,3 under sharper spectral gradient decay (Figure \ref{fig:spectral_decay}), as noted in Remark~\ref{rem:tighter_bound}. Conversely, slower spectral decay in Permuted MNIST (Figure \ref{fig:singular-values-all} in the appendix) benefits SketchOGD-1. One may expect the slower decay a priori as the random permutations would interrupt transferring learned features to the following tasks more than other benchmarks, resulting in less similar gradients between tasks. In this manner, intuition about tasks can inform method selection. 


Another intuition for the behavior of SketchOGD-2,3 is that sketching $GG^\top$ emphasizes larger gradients due to quadratic scaling of their magnitudes compared to simply $G$. This property is particularly effective if there are few key directions (equivalently, a sharper spectral decay).
In addition, Table \ref{table:accuracies} shows that SketchOGD-2,3 benefit more from sketching more gradients, while SketchOGD-1 improves less. This trend could be explained by the theoretical bounds, as Theorem \ref{thm:expectation-bound} is quadratic in the smallest singular values while Theorem \ref{thm:method-1} is linear and therefore is hurt more by additional small singular values.

Lastly, we highlight some practical advantages of SketchOGD over PCA-OGD.
First, SketchOGD has a fixed memory usage regardless of the number of tasks, whereas PCA-OGD's memory grows with each task. This makes it hard to predict memory costs and choose the right hyperparameters for PCA-OGD without knowing the number of tasks in advance. In addition, PCA-OGD incurs overhead from storing all gradients for a task before compression, limiting scalability for large task datasets.
Secondly, SketchOGD’s incremental updates summarize gradients across tasks, leveraging shared directions that PCA-OGD’s independent task processing may miss. Different tasks likely share some directions in their gradients, especially if subsequent tasks are similar or connected. Then, those shared directions can accumulate in importance and be picked up by the sketch, while PCA on individual tasks might discard them without knowing their shared importance across tasks.

\paragraph{Future Work}
Our results motivate several directions of future inquiry. 
First, Theorem \ref{thm:symmetric} leaves room for a more rigorous analysis of the benefits of leveraging the symmetric structure of $GG^\top$ in SketchOGD-3. Also, the difference between the bounds in \ref{thm:method-1} and \ref{thm:expectation-bound} involve replacing one instance of $\Tr (\Sigma_2)$ with $\Tr (\Sigma_2^2) \Tr (\Sigma_1^{-1})$; we would like to investigate the way different types of singular value decay would affect these two bounds differently.
Another area of importance is one of our potential explanations for SketchOGD-2,3's performance, namely the quadratically scaled magnitudes of gradients in $GG^\top$. We may test whether scaling the gradients by powers of their magnitudes before compression may influence results for the better.
In addition, given SketchOGD’s incremental nature, developing an incremental PCA-OGD variant for comparison is another promising direction. However, any incremental forms of PCA won't have the linear properties of sketching that make it equivalent to sketching the full matrix all at once; a modified PCA will likely still involve repeated truncation, discarding important directions before they have time to accumulate across tasks.
\rev{Another interesting direction is to analyze formal privacy guarantees~\citep{zhu2021rgap, wang2022protect}, given that one of SketchOGD's primary advantages is its ability to avoid storing raw data, which is a key source of privacy concerns. In addition, extending the empirical study to larger-scale datasets (e.g., CORe50, ImageNet) and modern architectures (e.g., ResNets, Transformers) represents an important avenue for future research.}
\vspace{-.2cm}
\section{Conclusion}
\vspace{-.1cm}
Memory efficiency is crucial for continual learning over long horizons. We proposed SketchOGD, an efficient algorithm for systematically resolving the growing memory problem of orthogonal gradient descent (OGD) in continual learning. Our sketching approach is easy to implement; it updates a compact summary of gradients online without relying on boundaries between the tasks, requiring constant memory irrespective of the number of gradients sketched. This makes it suitable for scenarios with uncertain futures. We provided a novel performance metric for the sketch, relevant to the downstream task of OGD, along with formal guarantees through deterministic and in-expectation bounds. Empirical results demonstrate that SketchOGD outperforms existing memory-efficient OGD variants, making it a robust choice for continual learning with fixed memory constraints.

\section*{Acknowledgements}

This work was supported in part by the MIT-IBM Watson AI Lab, MathWorks, the MIT-Amazon Science Hub, and the MIT-Google Program for Computing Innovation. The authors acknowledge the MIT SuperCloud and Lincoln Laboratory Supercomputing Center \cite{reuther2018interactive} for providing computing resources that have contributed to the research results reported within this paper.

\bibliography{refs}
\bibliographystyle{unsrtnat}

\appendix
\section{Appendix}
\label{sec:appendix}

\subsection{Spectral Decay}
\begin{figure}[h!]
    \centering
    \includegraphics[width=\columnwidth]{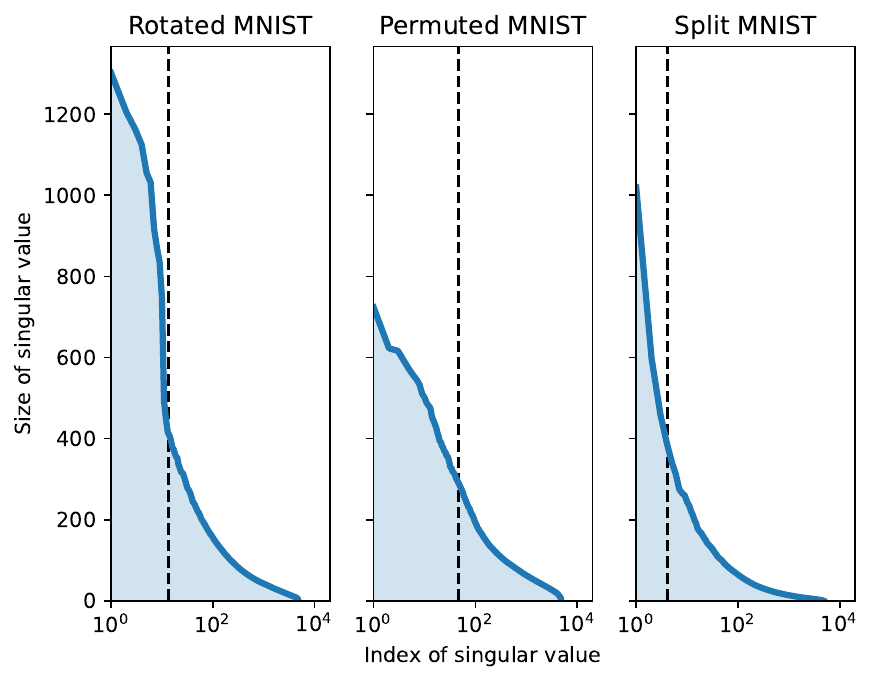}
    \caption{Plotted singular values of $4800$ gradient vectors extracted from training OGD on the Rotated MNIST, Permuted MNIST, and Split MNIST classification benchmarks. The x-axis is on a log scale for easier viewing, as the decay is too fast to make sense of otherwise. The stable rank of the matrix is shown as a dotted black line, which can be viewed as an effective rank. Permuted MNIST shows relatively slow spectral decay compared to the other two benchmarks. However, the decay is still sharp enough so that even taking a small number of the top singular vectors ($\approx 300$) would encode the majority of the information for all three benchmarks.}
    \label{fig:singular-values-all}
\end{figure}

\subsubsection{Effects of Spectral Decay on Bounds}

Recall that the theoretical upper bound on the expected error metric for SketchOGD-1 is $\min_{\gamma\leq k-2} (1+\tfrac{\gamma}{k-\gamma-1})\cdot\Tr{\Sigma_2}$ where $\Sigma_2$ is the lower singular value matrix of $\Sigma=\Sigma_G \Sigma_G^T$ split at the index $\gamma\leq k-2$, and $\Sigma_G$ is the singular value matrix of the gradient matrix $G$. Then, sharper decays push the optimal split location $\gamma$ away from 0. For intuition, we compare two different spectral decays.

First, consider a flat spectral decay such that $\Sigma=\lambda I_{p\times p}$ for some $\lambda>0$ and the $p\times p$ identity matrix $I_{p\times p}$ (Here, we consider a full-rank $\Sigma$ for simplicity, but it has rank $n$ when the number of the encountered datapoints $n$ is less than the number of parameters $p$). Then, we can solve for the optimal split location $\gamma^*$ that achieves the tightest upper bound:
\begin{align*}
    \gamma^* &= \argmin_{\gamma \le k-2} (1+\tfrac{\gamma}{k-\gamma-1})\cdot\Tr{\Sigma_2}\\
    &= \argmin_{\gamma \le k-2} \tfrac{k-1}{k-\gamma-1}\cdot\lambda(p-\gamma)
    = \argmin_{\gamma \le k-2} \tfrac{p-\gamma}{k-\gamma-1}
    = 0.
\end{align*}

Plugging $\gamma^*$ in, we get the upper bound $(1+\tfrac{\gamma^*}{k-\gamma^*-1})\cdot\Tr{\Sigma_2} = p\lambda$. Intuitively, a flat decay pushes the optimal split location $\gamma$ to $0$, and scaling the singular values by a constant does not affect the optimal split location.

In contrast, let's consider $\Sigma$ of which the diagonal values linearly decay from $2\lambda$ to $0$ over $p$ values. Then, the optimal split location can change:
\begin{align*}
    \gamma^* &= \argmin_{\gamma \le k-2} (1+\tfrac{\gamma}{k-\gamma-1})\cdot\Tr{\Sigma_2}\\
    &\approx \argmin_{\gamma \le k-2} \tfrac{k-1}{k-\gamma-1}\cdot\tfrac{2\lambda}{p}(p-\gamma)^2/2
    = \argmin_{\gamma \le k-2} \tfrac{(p-\gamma)^2}{k-\gamma-1},
\end{align*}
where the approximation for calculating $\Tr{\Sigma_2}$ comes from the simplification $1+2+\ldots+(p-\gamma) = (p-\gamma)^2/2$.

To calculate when the minimum is achieved, we can set the derivative with respect to $\gamma$ to be $0$:
\begin{align*}
    0 = \tfrac{d}{d\gamma}\tfrac{(p-\gamma)^2}{k-\gamma-1}|_{\gamma=\gamma^*}
    &= (p-\gamma^*)^2 - 2(p-\gamma^*)(k-1-\gamma^*)\\
    &= (p-\gamma^*) - 2(k-1-\gamma^*),
\end{align*}
which results in $\gamma^* = 2k-2-p$. As long as $2k-2-p\ge 0$, then the optimal split location becomes $\gamma^* = 2k-2-p$. Plugging in, the upper bound computes to $\frac{4\lambda}{p} (k{-}1)(p{-}k{+}1)$. Note that this bound is smaller than in the constant singular values case, even though the sum of the singular values is the same. Similarly, the faster the decay becomes, the more $\gamma^*$ is pushed away from $0$, and the tighter the theoretical upper bound becomes.

\subsection{Proofs}
\theoremone*

\begin{proof}
Expanding our metric results in:
\[\mathbb{E}[\mathcal{E}_G(B)] = \mathbb{E}\|(G-BB^\top G)\|_F^2. \]
Then, we can apply Theorem 10.5 of \citet{halko}:
\begin{align}
\mathbb{E}[\mathcal{E}_G(B)] \le \min_{\gamma< k-1} (1+\tfrac{\gamma}{k-\gamma-1})\cdot\tau^2_{\gamma+1}G,
\end{align}
where $\tau^2_{\gamma+1}G$ is the sum of the squares of the singular values of $G$, skipping the $\gamma$ largest singular values in magnitude. Therefore, $\tau^2_{\gamma+1}G = \Tr{\Sigma_2}$, finishing.
\end{proof}

\theoremdet*
\begin{proof}
By unitary invariance of the Frobenius norm, we can drop the leading orthogonal factors: $\widetilde{A} = \Sigma U^\top = U^\top A$, $\widetilde{Y} = \widetilde{A} \Omega$, $\widetilde{G} = U^\top G = \Sigma^{1/2} V$.
\begin{align*}
    \|(1-\proj_{B})&G\|_F^2 = \|(1-\proj_{Y})G\|_F^2\\
    &=\|U^\top(1-\proj_{Y})U \widetilde{G}\|_F^2
    =\|(1-\proj_{\widetilde{Y}})\widetilde{G}\|_F^2.
\end{align*}
Now, note that $\gamma$ being greater than the number of nonzero singular values of $G$ is equivalent to $\Sigma_1$ having some of its diagonal entries be zero. In that case, $\Sigma_2=0$ by the ordering of singular values by magnitude. Then, 
\[\range(\widetilde{G}) = \range(\widetilde{A}) = \range 
\begin{bmatrix}
 \Sigma_1 U_1^\top \\ 0
\end{bmatrix}. \]
$U_1^\top$ is full rank as $U$ is orthogonal, and we assumed $\Omega_1$ has full rank, so $\Omega_1 = U_1^\top \Omega$ implies $U_1^\top, \Omega_1$ have the same range. So:
\begin{equation*}
\range 
\begin{bmatrix}
 \Sigma_1 U_1^\top \\ 0
\end{bmatrix}  = \range \begin{bmatrix}
 \Sigma_1 \Omega_1 \\ 0
\end{bmatrix} = \range(\widetilde{Y}),
\end{equation*}
in which case $\|(1-\proj_{\widetilde{Y}})\widetilde{G}\|_F^2=0$. Else, we can now assume $\Sigma_1$ has strictly non-zero entries. We construct $Z$ by multiplying $\widetilde{Y}$ by a factor to turn the top part into the identity:
\begin{align*}
    \widetilde{Y} &= \widetilde{A}\Omega = \Sigma U^\top \Omega
    = \begin{bmatrix}
        \Sigma_1 U_1^\top \Omega\\
        \Sigma_2 U_2^\top \Omega\\
    \end{bmatrix}
    = \begin{bmatrix}
        \Sigma_1 \Omega_1\\
        \Sigma_2 \Omega_2
    \end{bmatrix}
    ,\\
    Z &= \widetilde{Y} \Omega_1^\dagger \Sigma_1^{-1} 
    = \begin{bmatrix}
        I\\ F 
    \end{bmatrix},
\end{align*}
where $F= \Sigma_2 \Omega_2 \Omega_1^{\dagger} \Sigma_1^{-1}$. By construction, $\range(Z) \subseteq \range(\widetilde{Y})$, so $Z^\perp$ can be split into $Y^\perp$ and an orthogonal subspace: $Z^\perp = Y^\perp \oplus (Y \cap Z^{\perp})$. Therefore, the projection of any vector onto $Z^\perp$ is at least as large as the projection onto $Y^\perp$. Apply this result to every column of $\widetilde{G}$ to get
\begin{equation*}
\|(1-\proj_{\widetilde{Y}})\widetilde{G}\|_F^2 \le \|(1-\proj_{{Z}})\widetilde{G}\|_F^2.
\end{equation*}

We can expand using the fact that the Frobenius norm is invariant to unitary factors:
\begin{align*}
\|(1-\proj_{{Z}})\widetilde{G}\|_F^2
&=\|(1-\proj_{{Z}})\Sigma_G\|_F^2\\
&=\|(1-\proj_{{Z}})\Sigma^{1/2}\|_F^2.
\end{align*}
Lastly, we write the square Frobenius norm as a trace, and use the fact that $1-\proj_{{Z}}$ is a projection matrix:
\begin{align*}
\|(1-\proj_{{Z}})&\Sigma^{1/2}\|_F^2\\
&=\Tr\left([(1-\proj_{{Z}})\Sigma^{1/2}]^\top(1-\proj_{{Z}})\Sigma^{1/2}\right)\\
&=\Tr(\Sigma^{1/2}(1-\proj_{{Z}})\Sigma^{1/2}).
\end{align*}

Now, we note that the expansion of $1-\proj_{{Z}}$ can be written:
\begin{align*}
1-\proj_{{Z}} &=I - Z(Z^\top Z)^{-1} Z^\top\\
 &= \begin{bmatrix}
I - (I + F^\top F)^{-1} & -(I + F^\top F)^{-1}F^\top\\
-F(I + F^\top F)^{-1} & I - F(I + F^\top F)^{-1}F^\top 
\end{bmatrix}.
\end{align*}

To examine the trace of this matrix conjugated by $\Sigma^{1/2}$, we will use the positive semi-definite (PSD) relation $\preccurlyeq$, where $X\preccurlyeq Y$ if and only if $Y-X$ is positive semi-definite.
We abbreviate the upper right block as $M=-(I + F^\top F)^{-1}F^\top$, and therefore the bottom left block becomes $M^\top =-F(I + F^\top F)^{-1}$.

By Proposition 8.2 of \citet{halko}, $I - (I + F^\top F)^{-1} \preccurlyeq F^\top F$. Next, as $F^\top F$ is PSD and symmetric, $(I + F^\top F)^{-1}$ is also PSD and symmetric. So, by the conjugation rule, $F(I + F^\top F)^{-1}F^\top$ is PSD. Therefore $I - F(I + F^\top F)^{-1}F^\top \preccurlyeq I$. So,
\[1-\proj_{{Z}} \preccurlyeq \begin{bmatrix}
 F^\top F & M\\
M^\top & I 
\end{bmatrix}.\]
Conjugating both sides by $\Sigma^{1/2}$ gives
\[\Sigma^{1/2}(1-\proj_{{Z}})\Sigma^{1/2} \preccurlyeq \begin{bmatrix}
 \Sigma^{1/2}_1 F^\top F \Sigma^{1/2}_1 & \Sigma^{1/2}_1 M \Sigma^{1/2}_2\\
\Sigma^{1/2}_2 M^\top \Sigma^{1/2}_1 & \Sigma^{1/2}_2\Sigma^{1/2}_2 
\end{bmatrix}.\]

Now, note that the trace of a PSD matrix is non-negative. So, given $X \preccurlyeq Y$, $\Tr(X)\le \Tr(Y)$. Applying this to the PSD relation above gives:
\begin{align*}
\Tr(\Sigma^{1/2}(1-&\proj_{{Z}})\Sigma^{1/2})\\
&\le \Tr(\Sigma^{1/2}_1 F^\top F \Sigma^{1/2}_1) + \Tr(\Sigma^{1/2}_2\Sigma^{1/2}_2)\\
&=\|F\Sigma^{1/2}_1\|^2_F + \|\Sigma^{1/2}_2\|^2_F.
\end{align*}
Expanding the definition of $F$ achieves the desired result:
\begin{align*}
\mathcal{E}_G(B)
&\le\|F\Sigma^{1/2}_1\|^2_F + \|\Sigma^{1/2}_2\|^2_F\\
&=\|\Sigma_2 \Omega_2 \Omega_1^{\dagger} \Sigma_1^{-1}\Sigma^{1/2}_1\|^2_F + \|\Sigma^{1/2}_2\|^2_F \\
&=\|\Sigma_2 \Omega_2 \Omega_1^{\dagger} \Sigma_1^{-1/2}\|^2_F + \|\Sigma^{1/2}_2\|^2_F.
\end{align*}
\end{proof}

\theoremexp*
\begin{proof}
By Theorem \ref{thm:deterministic-bound} we have:
\begin{equation*}
    \mathcal{E}_G(B) \le \|\Sigma_2 \Omega_2 \Omega_1^\dagger \Sigma_1^{-1/2}\|^2 + \|\Sigma_2^{1/2}\|^2,
\end{equation*}
where $\Omega_2 := U_2^\top \Omega$ for $U_2$ as defined in section $4.2.1$. Taking expectations on both sides,
\[\mathbb{E}[\mathcal{E}_G(B)] \le \mathbb{E}\left[\|\Sigma_2 \Omega_2 \Omega_1^\dagger \Sigma_1^{-1/2}\|^2 + \|\Sigma_2^{1/2}\|^2\right].\]
The expectation can be split over $\Omega_2$ and $\Omega_1$. Now, we use Proposition 10.1 of \citet{halko}, which states that $\mathbb{E}_G\|M_1M_2M_3\|^2 = \|M_1\|^2 \|M_3\|^2$ if $M_2$ is standard normally distributed:
\begin{align*}
\mathbb{E}\left[\|\Sigma_2 \Omega_2 \Omega_1^\dagger \Sigma_1^{-1/2}\|^2 + \|\Sigma_2^{1/2}\|^2\right]\\
&\hspace{-2cm} =\mathbb{E}_{\Omega_1}\left[\|\Sigma_2\|^2 \|\Omega_1^\dagger \Sigma_1^{-1/2}\|^2 + \|\Sigma_2^{1/2}\|^2\right] \\
&\hspace{-2cm} = \Tr(\Sigma_2^2) \mathbb{E}_{\Omega_1} \left[\|\Omega_1^\dagger \Sigma_1^{-1/2}\|^2\right] + \Tr(\Sigma_2).
\end{align*}
Now, we calculate the term $\mathbb{E}_{\Omega_1} \|\Omega_1^\dagger \Sigma_1^{-1/2}\|^2$ separately:
\begin{align*}
\mathbb{E}\|\Omega_1^\dagger \Sigma_1^{-1/2}\|^2 &= \mathbb{E}\left[\Tr ((\Omega_1^\dagger \Sigma_1^{-1/2})^\top \Omega_1^\dagger \Sigma_1^{-1/2})\right] \\
&=\mathbb{E}\left[ \Tr(\Sigma_1^{-1/2} (\Omega_1^\dagger)^\top \Omega_1^\dagger \Sigma_1^{-1/2})\right] \\
&=\mathbb{E}\left[\Tr(\Sigma_1^{-1/2} (\Omega_1 \Omega_1^\top)^{-1} \Sigma_1^{-1/2})\right]\\
&=\Tr(\Sigma_1^{-1/2}\mathbb{E} \left[( \Omega_1 \Omega_1^\top )^{-1}\right]\Sigma_1^{-1/2}).
\end{align*}

Now, note that $W = \Omega_1 \Omega_1^\top$ has a Wishart distribution, namely $W_k(\gamma, I)$, so, by a well-known result of multivariate statistics \citep[pg. 97]{muirhead}, $\mathbb{E}[ W^{-1}] = \frac{1}{k-\gamma-1} I$. Then:
\begin{align*}
\mathbb{E}_{\Omega_1} \|\Omega_1^\dagger \Sigma_1^{-1/2}\|^2 
&=\frac{1}{k-\gamma-1} \Tr(\Sigma_1^{-1/2} I \Sigma_1^{-1/2})\\
&=\frac{1}{k-\gamma-1} \Tr(\Sigma_1^{-1}).
\end{align*}

This result allows for the expansion of the $\mathbb{E}_{\Omega_1} \|\Omega_1^\dagger \Sigma_1^{-1/2}\|^2$ term:
\begin{align*}
    \Tr (\Sigma_2^2) \mathbb{E}_{\Omega_1} \|\Omega_1^\dagger &\Sigma_1^{-1/2}\|^2 + \Tr(\Sigma_2)\\
    &= \Tr (\Sigma_2^2) \Tr(\Sigma_1^{-1})\cdot (\tfrac{\gamma}{k-\gamma-1}) + \Tr(\Sigma_2).
\end{align*}

Finally, to finish we note that the choice of $\gamma$ was arbitrary except for the condition $\gamma \le k-2$, so taking the minimum over all such $\gamma$ creates a tighter upper bound:
\begin{equation*}
    \mathbb{E}[\mathcal{E}_G(B)] \le \min_{\gamma\le k-2}\tfrac{\gamma}{k-\gamma-1}\Tr(\Sigma_2^{2}) \Tr(\Sigma_1^{-1}) + \Tr(\Sigma_2).
\end{equation*}
\end{proof}

\theoremthree*
\begin{proof}
For matrices $Q,X$ described in Sketching Method \ref{alg:method3}, $\range(B_\mathrm{sym}) = \range([Q,X^\top]) = \range(B) + \range(X^\top)$. So, the range of $B$ is just a subset of the range of the symmetric approximation $B_\mathrm{sym}$, which implies:
\[|(1-\proj_{B_\mathrm{sym}})u|_2 \le |(1-\proj_{B})u|_2.\]
By squaring and summing over the columns of $G$, the result expands to:
\begin{align*}
\|(1-\proj_{B_\mathrm{sym}})G\|_F^2 &\le \|(1-\proj_{B})G\|_F^2,
\end{align*}
which implies $\mathcal{E}_G(B_\mathrm{sym}) \le \mathcal{E}_G(B)$. This completes the proof.
\end{proof}

\rev{
\theoremconv*
\begin{proof}
We first characterize $w_\tau^*$ as the solution of an optimization and then prove that the parameter $w_\tau^{(j)}$ actually converges to it.
At each task $\tau$, SketchOGD tries to minimize the loss function~\eqref{eq:loss} while updating the parameter orthogonally to the columns of $B_\tau$. Thus, its goal can be presented with the following optimization problem:
\begin{equation}\label{eq:opt_sketchogd}
\begin{aligned}
    w_{\tau}^* = 
    \argmin_{w\in\mathbb{R}^p} & \frac{1}{n_\tau} \|f(\mathtt{X}_\tau,w)-\mathtt{Y}_\tau\|_2^2 + \lambda \|w-w_{\tau-1}^*\|_2^2\\
    \text{ s.t. } & B_\tau^\top (w-w_{\tau-1}^*)=0
\end{aligned},
\end{equation}
while the error term in the objective can be reformulated as
\begin{align*}
    f(\mathtt{X}_\tau,w)\!-\!\mathtt{Y}_\tau &= f(\mathtt{X}_\tau,w) \!-\! f(\mathtt{X}_\tau,w_{\tau\!-\!1}^*)\!-\! (\mathtt{Y}_\tau \!-\! f(\mathtt{X}_\tau,w_{\tau\!-\!1}^*))\\
    &= G_\tau^\top(w-w_{\tau-1}^*)-\tilde{\mathtt{Y}}_\tau.
\end{align*}
For $b_\tau:=\rank(B_\tau)$, consider $\bar{B}_\tau\in\mathbb{R}^{p\times(p-b_\tau)}$ as an orthogonal complement of $B_\tau$, which satisfies $T_\tau=I_p-B_\tau B_\tau^\top=\bar{B}_\tau\bar{B}_\tau^\top$. Then, the constraint in~\eqref{eq:opt_sketchogd} has equivalence:
\begin{equation*}
    B_\tau^\top (w-w_{\tau-1}^*)=0 \iff \exists\tilde{w}\in\mathbb{R}^{p-b_\tau} \text{ s.t. } w-w_{\tau-1}^* = \bar{B}_\tau \tilde{w}.
\end{equation*}
Then, \eqref{eq:opt_sketchogd} is equivalent to solving the following unconstrained optimization with $w_{\tau}^*=\bar{B}_\tau\tilde{w}_\tau^* + w_{\tau-1}^*$:
\begin{equation}\label{eq:opt_sketchogd_unconst}
    \tilde{w}_\tau^*=
    \argmin_{\tilde{w}\in\mathbb{R}^{p-b_\tau}} \frac{1}{n_\tau} \|G_\tau^\top \bar{B}_\tau\tilde{w} -\tilde{\mathtt{Y}}_\tau\|_2^2 + \lambda \|\bar{B}_\tau\tilde{w}\|_2^2.
\end{equation}
Since $\bar{B}_\tau^\top\bar{B}_\tau=I_{p-b_\tau}$, \eqref{eq:opt_sketchogd_unconst} is equivalent to a ridge regression problem:
\begin{equation}\label{eq:opt_tilde}
    \tilde{w}_\tau^*=
    \argmin_{\tilde{w}\in\mathbb{R}^{p-b_\tau}} \frac{1}{n_\tau} \|G_\tau^\top \bar{B}_\tau\tilde{w} -\tilde{\mathtt{Y}}_\tau\|_2^2 + \lambda \|\tilde{w}\|_2^2,
\end{equation}
which has a unique solution:
\begin{align*}
    \tilde{w}_\tau^* &=
    (\bar{B}_\tau^\top G_\tau G_\tau^\top \bar{B}_\tau+ n_\tau \lambda I_{p-b_\tau})^{-1} \bar{B}_\tau^\top G_\tau \tilde{\mathtt{Y}}_\tau\\
    &= \bar{B}_\tau^\top G_\tau (G_\tau^\top \bar{B}_\tau \bar{B}_\tau^\top G_\tau + n_\tau \lambda I_{p-b_\tau})^{-1} \tilde{\mathtt{Y}}_\tau.
\end{align*}

Now, we show $w_\tau^{(j)}$ converges to $w_\tau^*=\bar{B}_\tau\tilde{w}_\tau^* + w_{\tau-1}^*$ if the step-size $\eta_\tau$ is small enough. First,
\begin{equation}\label{eq:solution_gap}
\begin{aligned}
    w_\tau^{(j+1)} \!-\! w_\tau^* &= w_\tau^{(j)} - \eta_\tau T_\tau \nabla \mathcal{L}_\tau(w_\tau^{(j)}) - w_\tau^*\\
    &= \big(I - \eta_\tau T_\tau(\frac{2}{n_\tau}G_\tau G_\tau^\top + 2\lambda I_p)\big)(w_\tau^{(j)} \!-\! w_{\tau-1}^*)\\
    & \hspace{2cm} + \frac{2}{n_\tau}\eta_\tau T_\tau G_\tau \tilde{\mathtt{Y}}_\tau + w_{\tau-1}^* - w_\tau^*\\
    &= \big(I - \eta_\tau T_\tau(\frac{2}{n_\tau}G_\tau G_\tau^\top + 2\lambda I_p)\big) (w_\tau^{(j)} - w_{\tau}^*),
\end{aligned}    
\end{equation}
where the last equality is from the optimality of $\tilde{w}_\tau^*$ in~\eqref{eq:opt_tilde} such that it makes the gradient of the objective zero:
\begin{equation*}
    0 = (\frac{2}{n_\tau} \bar{B}_\tau^\top G_\tau G_\tau^\top \bar{B}_\tau + 2 \lambda I_{p-b_\tau})\tilde{w}_\tau^* - \frac{2}{n_\tau} \bar{B}_\tau^\top G_\tau \tilde{\mathtt{Y}}_\tau,
\end{equation*}
which implies
\begin{align*}
    \frac{2}{n_\tau}\eta_\tau T_\tau G_\tau \tilde{\mathtt{Y}}_\tau
    &= \eta_\tau \bar{B}_\tau (\frac{2}{n_\tau} \bar{B}_\tau^\top G_\tau G_\tau^\top \bar{B}_\tau + 2 \lambda I_{p-b_\tau})\tilde{w}_\tau^*\\
    &= \eta_\tau \bar{B}_\tau \bar{B}_\tau^\top (\frac{2}{n_\tau} G_\tau G_\tau^\top + 2 \lambda I_{p})\bar{B}_\tau \tilde{w}_\tau^*\\
    &= \eta_\tau T_\tau (\frac{2}{n_\tau} G_\tau G_\tau^\top + 2 \lambda I_{p}) (w_{\tau}^*-w_{\tau-1}^*).
\end{align*}

Then, the recursive expression in~\eqref{eq:solution_gap} converges if and only if the spectral radius of $I - \eta_\tau T_\tau(\frac{2}{n_\tau}G_\tau G_\tau^\top + 2\lambda I_p)$ is less than 1. The condition is equivalent to
\begin{equation}
    0<\eta_\tau<\dfrac{1}{\lambda_{max}\big(T_\tau(\frac{1}{n_\tau}G_\tau G_\tau^\top+\lambda I_p)\big)}.
\end{equation}

\end{proof}
}

\begin{figure}[t]
\centering
  \includegraphics[width=\columnwidth]{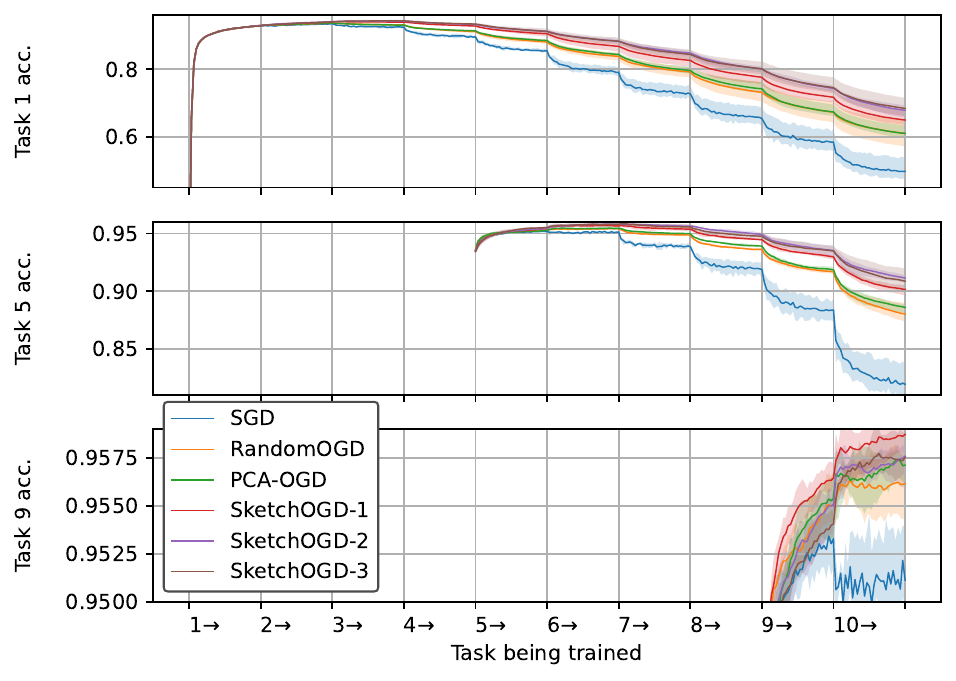}
  \caption{Accuracy of various continual learning methods on tasks 1, 5, and 9 of the Rotated MNIST benchmark. Each plot shows the initial learning of a task and the subsequent slow forgetting as the model trains on new tasks. As shown, SketchOGD-2,3 perform the best in this scenario.}
  \label{fig:rotated_graph_large}
\end{figure}

\begin{figure}[t]
  \centering
  \includegraphics[width=\columnwidth]{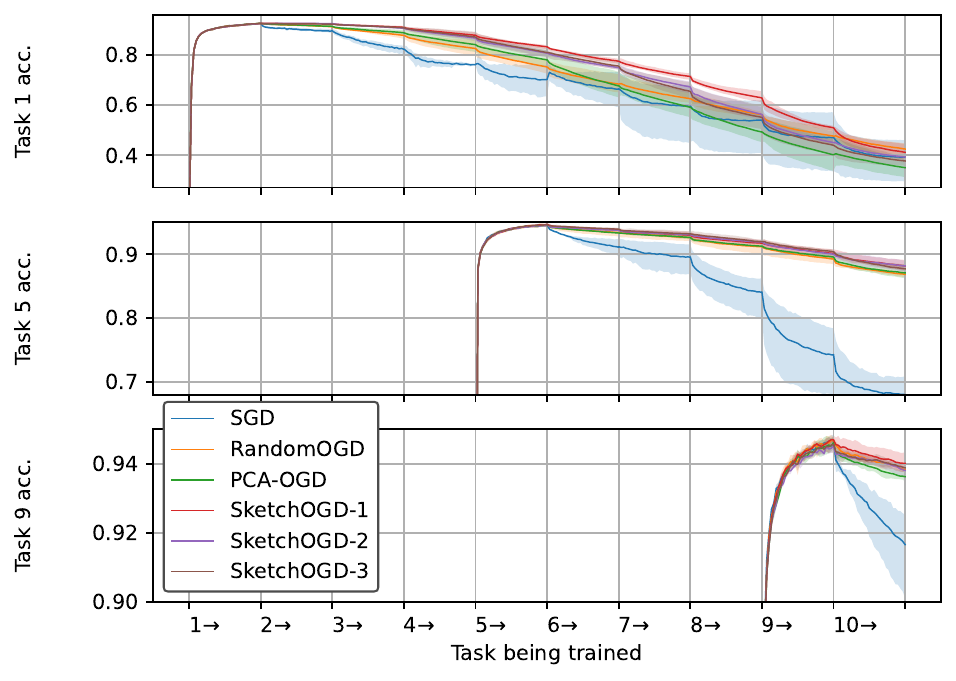}
  \caption{Accuracy of various continual learning methods on tasks 1, 5, and 9 of the Permuted MNIST benchmark. Each plot shows the initial learning of a task and the subsequent slow forgetting as the model trains on new tasks. As shown, SketchOGD-1 performs the best in this scenario. Interestingly, SGD experiences relatively less catastrophic forgetting on the first task compared to later tasks.}
  \label{fig:permuted_graph}
\end{figure}

\begin{figure}[t]
\centering
  \includegraphics[width=\columnwidth]{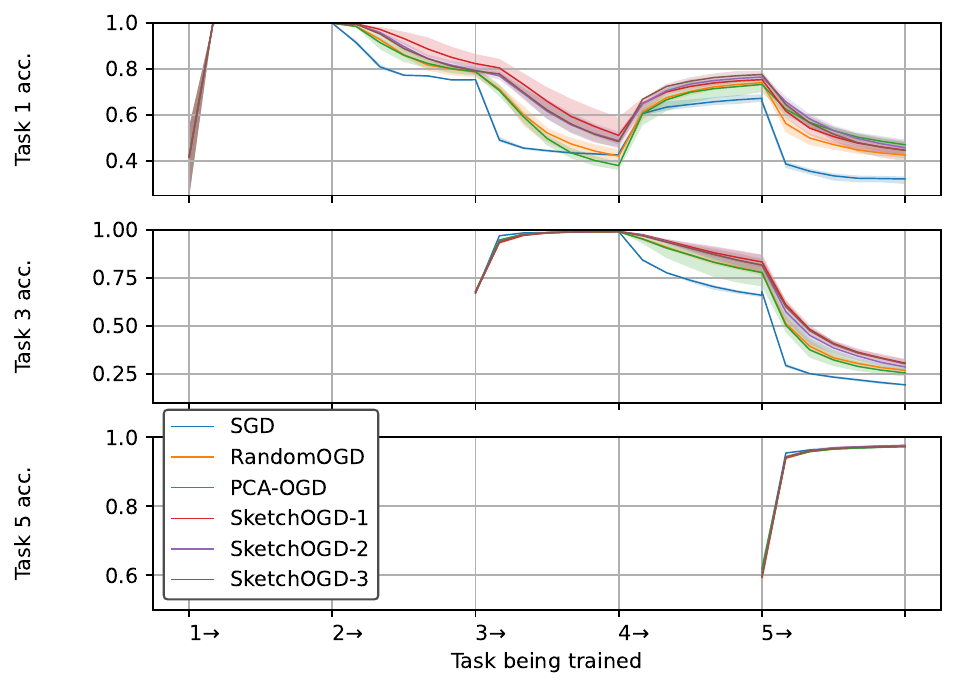}
  \caption{Accuracy of various continual learning methods on tasks 1, 3, and 5 of the Split MNIST benchmark. Each plot shows the initial learning of a task and the subsequent slow forgetting as the model trains on new tasks. The OGD variants show similar performance in this scenario while they outperform SGD.}
  \label{fig:split_graph}
\end{figure}

\begin{figure}[t]
\centering
  \includegraphics[width=\columnwidth]{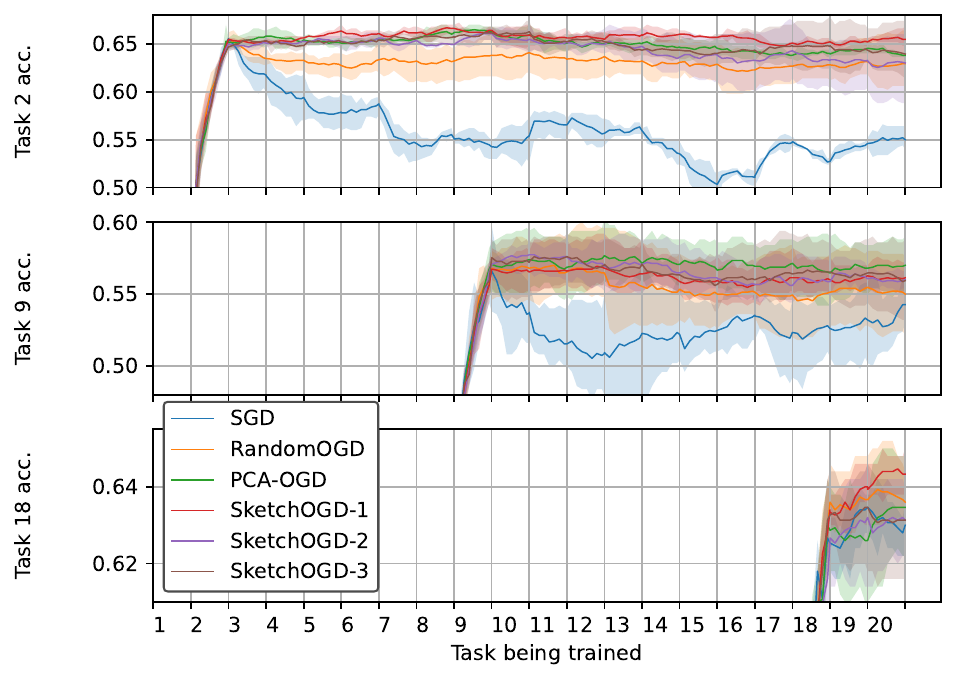}
  \caption{Accuracy of various continual learning methods on tasks 2, 9, and 18 of the Split CIFAR benchmark. Each plot shows the initial learning of a task and the subsequent slow forgetting as the model trains on new tasks. The OGD variants show similar performance in this scenario while they outperform SGD.}
  \label{fig:split_graph_cifar}
\end{figure}

\begin{figure}[t]
\centering
  \includegraphics[width=\columnwidth]{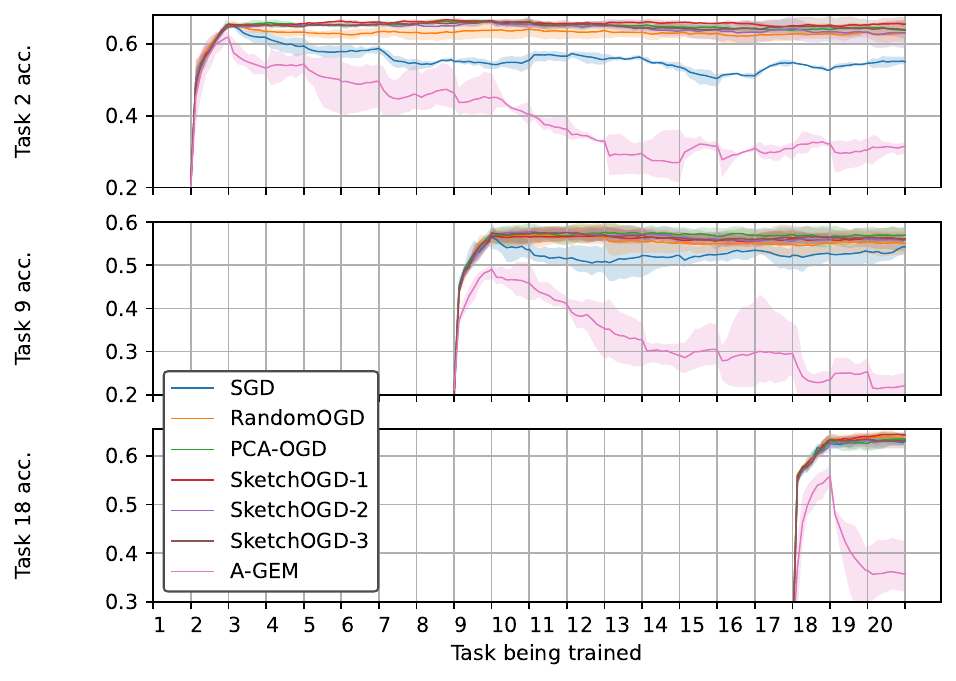}
  \caption{Accuracy of various continual learning methods including A-GEM on tasks 2, 9, and 18 of the Split CIFAR benchmark. Each plot shows the initial learning of a task and the subsequent slow forgetting as the model trains on new tasks. As shown, A-GEM experiences significantly more catastrophic forgetting than other methods including SGD.}
  \label{fig:split_graph_cifar_agem}
\end{figure}

\subsection{Additional Experimental Results}

In this section, we provide an extended list of plots for visualizing the experimental results. First, we show the performance of SketchOGD compared to other OGD variants in Rotated MNIST, Permuted MNIST, Split MNIST, and Split CIFAR in Figures \ref{fig:rotated_graph_large}-\ref{fig:split_graph_cifar}. Then, we also plot A-GEM's performance on Split CIFAR in Figure \ref{fig:split_graph_cifar_agem}. A-GEM performs significantly worse in Split CIFAR than any OGD variant, suggesting that the single-gradient approach of A-GEM is ineffective for larger models.

\end{document}